\newtheorem{theorem}{Theorem}
\newtheorem{lemma}{Lemma}
\newtheorem{remark}{Remark}
\newtheorem{definition}{Definition}
\def\BibTeX{{\rm B\kern-.05em{\sc i\kern-.025em b}\kern-.08em
    T\kern-.1667em\lower.7ex\hbox{E}\kern-.125emX}}
\begin{document}

\title{Differentially Private ADMM for Distributed Medical Machine Learning}

\author{
\IEEEauthorblockN{Jiahao Ding,~\IEEEmembership{Student Member,~IEEE}, Xiaoqi Qin,~\IEEEmembership{Member,~IEEE}, Wenjun Xu,~\IEEEmembership{Senior Member,~IEEE}, Yanmin Gong,~\IEEEmembership{Member,~IEEE}, Chi Zhang,~\IEEEmembership{Senior Member,~IEEE} and Miao Pan,~\IEEEmembership{Senior Member,~IEEE}}

\thanks{J. Ding and M. Pan are with the Department of Electrical and Computer Engineering, University of Houston, Houston, TX 77204. (Email: jding7@uh.edu, mpan2@uh.edu ). 
X. Qin and W. Xu are with the School of Information and Communication Engineering, Beijing University of Posts and Telecommunications, Beijing, China 100876. (Email: xiaoqiqin@bupt.edu.cn, wjxu@bupt.edu.cn). 
Y. Gong is with the Department of Electrical and Computer Engineering, University of Texas at San Antonio, TX 78249. (Email: yanmin.gong@utsa.edu). 
C. Zhang is with School of Information Science and Technology, University of Science and Technology of China, Hefei, China 230027. (Email: chizhang@ustc.edu.cn).}
}

\maketitle

\begin{abstract}
Due to massive amounts of data
distributed across multiple locations,  distributed machine learning has attracted a lot of research interests. Alternating Direction Method of Multipliers (ADMM) is a powerful method of designing distributed machine learning algorithm, whereby each agent computes over local datasets and exchanges computation results with its neighbor agents in an iterative procedure. There exists significant privacy leakage during this iterative process if the local data is sensitive. In this paper, we propose a differentially private ADMM algorithm (P-ADMM) to provide dynamic zero-concentrated differential privacy (dynamic zCDP), by inserting Gaussian noise with linearly decaying variance. We prove that P-ADMM has the same convergence rate compared to the non-private counterpart, i.e., $\mathcal{O}(1/K)$ with $K$ being the number of iterations and linear convergence for general convex and strongly convex problems while providing differentially private guarantee. Moreover, through our experiments performed on real-world datasets, we empirically show that P-ADMM has the best-known performance among the existing differentially private ADMM based algorithms.
\end{abstract}

\section{Introduction}
With the rapid development of sensing technologies, the vast amount of data has already been generated in recent years. Building machine learning models over such deluge data in one machine is impossible, especially in many cases that data is distributed across different locations. Therefore, the advent of distributed machine learning is imperative to reduce storage and local computational costs, while improving robustness of machine learning models~\cite{xing2016strategies}. 
Several distributed optimization approaches have been developed to optimize large-scale machine learning problems. In particular, as a popular optimization technique, alternating direction method of multipliers (ADMM)\cite{M2AN_1975__9_2_41_0}\cite{gabay1976dual} has been very often used in the context of distributed machine learning \cite{taylor2016training,zheng2016fast, zhang2018training}, which is extremely robust and easy to implement. Furthermore, ADMM based algorithms can typically achieve a fast convergence rate $\mathcal{O}(1/K)$, where $K$ is the number of iterations~\cite{wei2012distributed}.

In the distributed setting, the agents on a connected network process data locally by minimizing small optimization problems,
and exchange the local results among the neighbors to arrive at a global minimization solution.
However, the communication among agents in such a distributed manner brings about privacy concerns if the local agent's training data contains sensitive information such as salary or medical records. Therefore, there is a great need of privacy preserving algorithm to ensure privacy guarantee in such iterative processes. Differential privacy \cite{dwork2014algorithmic} is a well-defined framework for quantifying privacy risk revealed by an algorithm from adversarial inference. Briefly, differentially private algorithms prevent the adversary with arbitrary background knowledge by obfuscating the query responses. 
In the literature, there are some pioneering work on integrating differential privacy and ADMM \cite{zhang2016dual, zhang2018improving,zhang2018recycled, huang2018dp}.
In \cite{zhang2016dual}, Zhang and Zhu proposed two privacy preserving ADMM algorithms by perturbing the dual variable and the primal variable, respectively; However, both of algorithms just bounded the privacy loss of one agent at a single iteration. Later, in \cite{zhang2018improving} a penalty perturbation method to bound the total privacy loss of all agents instead of only one agent during the entire process is developed. Moreover, in \cite{zhang2018recycled} based on the same penalty perturbation method, Zhang et al. presented a modified ADMM algorithm by repeatedly using the existing computational results to reduce the computational burden and reach a less accumulated privacy loss. In addition, in \cite{huang2018dp}, Huang et al. employed an augmented Lagrangian function with first order approximation at all iterations, while they consider a network with star topology, i.e., a central server is required which summarizes all shared primal variables and broadcasts it back to all agents over the network.

In this paper, we propose a differentially private ADMM algorithm (P-ADMM) to remedy privacy concerns of distributed machine learning. Specifically, a linearly decaying Gaussian noise is added to perturb exchanged variables at each iteration. Then a new concept to quantify privacy leakage of a general distributed algorithm, called dynamic zero-concentrated differential privacy (dynamic zCDP), is developed, which stems from the definition of zero-concentrated differential privacy \cite{bun2016concentrated} and enjoys significant accuracy and tight privacy loss composition. Furthermore, we theoretically prove the convergence rate of P-ADMM on general convex and strongly convex problems and provide a rigorous privacy analysis. Overall, the main contributions of this paper are summarized as follows.
\begin{itemize}
    \item We propose a differentially private ADMM algorithm (P-ADMM) by introducing a Gaussian noise with a linearly decaying variance to address the privacy concerns in distributed machine learning over large datasets.
    \item We introduce a new privacy framework to quantify the privacy leakage in a distributed and iterative setting, called dynamic zCDP, and present the privacy analysis of P-ADMM based on this privacy framework.
    \item We provide convergence analysis of P-ADMM for general convex and strongly convex objectives. Note that our analysis shows that P-ADMM has a linear convergence rate for strongly convex problems. For general convex problems, P-ADMM shows an $\mathcal{O}(1/K)$ convergence rate, where $K$ is the iteration number.
    \item Using real-world datasets, we demonstrate that the algorithmic performance of P-ADMM significantly outperforms the state-of-the-art differentially private ADMM based algorithms. Specifically, P-ADMM exhibits nearly the same convergence properties as the non-private version while preserving differential privacy.
\end{itemize}

The rest of paper is organized as follows.
Section~\ref{Problem formulation} gives the problem formulation, and definition of ADMM and differential privacy. Then, the definition of dynamic zCDP and a privacy preserving version of ADMM are presented in Section~\ref{Private ADMM}. In Section~\ref{convergence analysis}, we provide the convergence analysis of the proposed algorithm and the numerical experiments based on real-world datasets are shown in Section~\ref{experiments}. Finally, we conclude our work in Section~\ref{conclusion}.
\section{Problem Setting and Preliminaries}\label{Problem formulation}
\subsection{Problem Setting}
Throughout the paper, we consider a bidirectional network given by an undirected graph
$\mathcal{G} =\{\mathcal{N}, \mathcal{E}\}$, which contains the set of agents/nodes $\mathcal{N} = \{ 1,\cdots, N \}$ and the set of edges $\mathcal{E}$ with $|\mathcal{E}| = E$. The total number of communication links among connected agents in $\mathcal{E}$ is denoted as $2E$. Let $\mathcal{V}_i$ denote the set of neighboring agents of agent $i$. For any agent $i$, it can only exchange information among its neighbors in $\mathcal{V}_i$. Here, each agent $i$ contains a dataset $D_i = \{(y_i^n, z_i^n)\}_{n = 1}^{|D_i|}$, where $y_i^n \in \mathcal{C}^*$ is a feature vector and the corresponding label is $z_i^n \in \mathbb{R}^z$. The goal of our problem is to train a classifier $w \in \mathbb{R}^d$ over the union datasets $\hat{D}  = \cup_{i \in\mathcal{N} } D_i$ in a decentralized manner (i.e., no centralized controller/fusion center). In order to train this classifier, we model the problem in the following regularized empirical risk minimization (ERM) problem
\begin{align}\label{erm}
    \min_{w \in \mathbb{R}^d} \sum_{i=1}^N \frac{1}{|D_i|}\sum_{n = 1}^{|D_i|} \mathcal{L}(z_i^n, y_i^n, w) +\mathcal{R}(w),
\end{align}
where $\mathcal{L}(\cdot):\mathbb{R}^z \times \mathcal{C}^* \times \mathbb{R}^d \to \mathbb{R} $ is the loss function, and $\mathcal{R}(w): \mathbb{R}^d \to \mathbb{R}$ is the regularizer to prevent overfitting. 

Empirical risk minimization as a supervised learning problem is arising very often in machine learning~\cite{shalev2017understanding}. Examples of ERM consists of the widely applicable problems of classification and regression in machine learning. For instance, in binary classification problems ($z_i^n \in \{+1,-1\} $), we get linear support vector machine (SVM) and logistic regression by setting the loss function $\mathcal{L}(z_i^n, y_i^n, w) = \max \{1-z_i^n w^T y_i^n,0 \}$ and $\mathcal{L}(z_i^n, y_i^n, w) = \log(1+\exp(-z_i^n w^T y_i^n))$, respectively. Moreover, a further introduction of ERM in machine learning can also be found in \cite{trevor2009elements}\cite{zhang2018mixup}.
\subsection{Preliminaries}
\subsubsection{Conventional ADMM}
To decentralize ERM problem~(\ref{erm}), we introduce a variable $x_i \in\mathbb{R}^d $ for agent $i$. Then, the ERM problem~(\ref{erm}) can be formulated as follows
\begin{equation} \label{eq:nc}
\begin{array}{cl}
\min\limits_{\{x_i\},\{p_{ij}\}} &\sum\limits_{i=1}^N f_i(x_i) \\
\mbox{s.t.}\ &x_i = p_{ij},~ x_j = p_{ij}, ~\forall (i,j) \in \mathcal{E}, \\
\end{array}
\end{equation}
where $f_i(x_i) = \frac{1}{|D_i|}\sum_{n = 1}^{|D_i|} \mathcal{L}(z_i^n, y_i^n, x_i) + \frac{1}{N} \mathcal{R}(x_i)$\cite{4407653}.
Note that the objective function of (\ref{eq:nc}) can be easily decoupled across the agents, and then each agent $i$ obtains a local classifier $x_i$ by only minimizing objective $f_i(x_i)$ over its own dataset. Moreover, since the network is connected, the constraints $x_i = p_{ij}$ and $ x_j = p_{ij}$ enforce global consensus, i.e., all local classifiers should be equal. Therefore, (\ref{eq:nc}) is equivalent to (\ref{erm}).
The ADMM algorithm can intuitively solve the above optimization problem in (\ref{eq:nc}) in a decentralized and collaborative manner.

To specify the ADMM algorithm, we rewrite problem (\ref{eq:nc}) in a matrix form. Defining $x:=[x_1;\cdots;x_N] \in \mathbb{R}^{Nd}$ and $p \in \mathbb{R}^{2Ed}$ as a vector concatenating all local variables $x_i$ and $p_{ij}$ separately. Further, define two blocks $A_1$ and $A_2\in \mathbb{R}^{2Ed \times Nd}$ that are consisted of $2E \times N$ blocks of $d\times d$ matrices. If $p_{ij}$ is the $q$-th block of $p$ and $(i,j) \in \mathcal{E}$, then both of the $(q,i)$-th block of $A_1$ and $(q,j)$-th block of $A_2$ are equal to identity matrices $I_d$. Otherwise the blocks are null. As a result, the matrix form of problem in (\ref{eq:nc}) can be written as follows
\begin{equation} \label{eq:nc-mat}
\begin{array}{cl}
\min\limits_{x,p} &f(x) + g(p), \\
\mbox{s.t.} &Ax + Bp = 0, \\
\end{array}
\end{equation}
where matrices $A := [A_1;A_2]$, $B: = [-I_{2Ed};-I_{2Ed} ]$, and aggregate function $f: \mathbb{R}^{Nd}\to \mathbb{R}$ is defined as $f(x)=\sum_{i=1}^N f_i(x_i)$, and $g(p) = 0$, which fits the standard form of ADMM.   

By assigning Lagrangian multipliers $\lambda \in \mathbb{R}^{4Ed}$, the augmented Lagrangian function of (\ref{eq:nc-mat}) is given by 
\begin{equation}\label{lag}
    L_c(x,p,\lambda) = f(x) + \left<Ax+Bp, \lambda\right> + \frac{\eta}{2}\|Ax+Bp\|_2^2.
\end{equation}
The idea of ADMM algorithm is to alternatively minimize $L_c(x,p,\lambda)$ regarding variables $x$, $p$ and $\lambda$. At a specific iteration $k+1$, ADMM has the following steps
\begin{align}
      \nabla f(x^{k+1}) + A^T\lambda^k + \eta A^T(Ax^{k+1}+Bp^k)&=0,\\
       B^T \lambda^k +\eta B^T(Ax^{k+1}+Bp^{k+1}) &=0, \\
       \lambda^{k+1} - \lambda^k - \eta(Ax^{k+1}+Bp^{k+1}) &= 0.
\end{align}
If we let $\lambda = [\beta; \gamma]$ with $\beta, \gamma \in \mathbb{R}^{2EN}$, $H_+ = A_1^T + A_2^T$ and $H_- = A_1^T - A_2^T$, the above ADMM algorithm updates can be reduced to
\begin{align}
    \nabla f(x^{k+1}) + \alpha^k +2\eta Mx^{k+1} - \eta L_{+}x^k  &=0 \label{u-x},\\
    \alpha^{k+1}-\alpha^{k}-\eta L_{-}x^{k+1} &= 0 \label{u-a},
\end{align}
where $\alpha = H_-\beta \in \mathbb{R}^{Nd}$ is a new multiplier, and $M = \frac{1}{2}(L_+ + L_-)$ with $L_+ = \frac{1}{2}H_+H_+^T$ and $L_- = \frac{1}{2}H_-H_-^T$. 
Note that $L_+$ is the signless Laplacian matrix and $L_-$ is the signed Laplacian matrix of the network. 

Remember that the ADMM algorithm updates (\ref{u-x}) and (\ref{u-a}) are in matrix form. Hence, each agent $i$ uses the following updates to obtain its own variable $x_i$,
\begin{align}
    \nabla f_i(x_i^{k+1}) + \alpha_i^k +2\eta |\mathcal{V}_i|x_i^{k+1}   =\eta\left(|\mathcal{V}_i|x_i^k + \sum_{j\in \mathcal{V}_i }x_j^k\right), \label{u-xx}\\
    \alpha_i^{k+1}=\alpha_i^{k}+\eta \left(|\mathcal{V}_i|x_i^{k+1} - \sum_{j\in \mathcal{V}_i }x_j^{k+1}\right), \label{u-aa}
\end{align}
where $\alpha_i \in \mathbb{R}^d$ is the local Lagrange multiplier of agent $i$ and $\alpha$ is the concatenated form of all $\alpha_i$. 

At iteration $k+1$, agent $i$ updates $x_i^{k+1}$ through (\ref{u-xx}) using its own previous results $x_i^k$, $\alpha_i^k$ and its neighbors' previous results $x_j^k$ with $j\in \mathcal{V}_i$, and then broadcasts $x_i^{k+1}$ to all its neighbors. After collecting all of its neighbors results $x_j^{k+1}$, agent $i$ updates its local multiplier $\alpha_i$ through (\ref{u-aa}).

\subsubsection{Differential Privacy}
Differential privacy (DP) introduced in~\cite{dwork2014algorithmic}\cite{ zhangpri} provides rigorous privacy guarantees by injecting random noise to perturb the released statistical results obtained from sensitive datasets. It ensures that the adversary with enough background knowledge cannot infer any information about a specific record with high confidence. The definition of differential privacy is defined as follows.
\begin{definition}[\bf{Differential Privacy}]
We say a randomized algorithm $\mathcal{M}$ gives $(\epsilon,\delta)$-differential privacy if for any datasets $D$ and $\hat{D}$ that differ in at most a single record, the privacy loss random variable of an output $o \in Range(\mathcal{M})$
\begin{align*}
    Z(o) = \ln\frac{{\rm Pr}[\mathcal{M}(D)= o] }{{\rm Pr}[\mathcal{M}(\hat{D}) = o]}
\end{align*}
is bounded by $\epsilon$ with probability at least $1-\delta$.
\end{definition}
The Gaussian mechanism is a generic method for satisfying $(\epsilon,\delta)$-differential privacy by adding Gaussian noise calibrated to the query function's sensitivity. Then, we define the sensitivity as follows.
\begin{definition}[\bf{Sensitivity}]\label{sens}
The sensitivity of a query function $f(\cdot)$ that takes as input a dataset $D$ is defined as
\begin{align*}
    \Delta_f = \max_{D,\hat{D}} \|f(D)-f(\hat{D})\|_2,
\end{align*}
where $D$ and $\hat{D}$ are any two neighboring datasets differing in at most one record.
\end{definition} 
Based on the definition of sensitivity, we show the Gaussian mechanism in the following theorem.
\begin{theorem}[\bf{Gaussian Mechanism}]\label{gaussian}
For a query function $f:\mathcal{D} \to \mathcal{R}^d $ with sensitivity $\Delta_f$, the Gaussian Mechanism that adds noise generated from the Gaussian distribution $\mathcal{N}(0,\sigma^2 I_d)$ to the output of function $f$ enjoys $(\epsilon,\delta)$-differential privacy,
where $\epsilon,\delta \in (0,1)$ and $\sigma \geq \frac{\sqrt{2 \ln(1.25/\delta) \Delta_f}}{\epsilon}$.
\end{theorem}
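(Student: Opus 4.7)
The plan is to work directly from the definition of differential privacy applied to the privacy loss random variable $Z$. Fix an arbitrary pair of neighboring datasets $D,\hat{D}$ and write $v := f(D)-f(\hat{D})$, so by Definition \ref{sens} we have $\|v\|_2 \leq \Delta_f$. Since the mechanism is $\mathcal{M}(D) = f(D)+\xi$ with $\xi \sim \mathcal{N}(0,\sigma^2 I_d)$, plugging the multivariate Gaussian density into the log-ratio and cancelling the common $\|o-f(D)\|_2^2$ term gives
\begin{equation*}
Z(o) \;=\; \frac{\langle o-f(D),\, v\rangle}{\sigma^2} + \frac{\|v\|_2^2}{2\sigma^2}.
\end{equation*}

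Next I would exploit the fact that under $\mathcal{M}(D)$ the output satisfies $o-f(D)=\xi$, so $\langle \xi, v\rangle$ is a one-dimensional zero-mean Gaussian with variance $\sigma^2\|v\|_2^2$. Hence $Z$ itself is Gaussian with mean $\|v\|_2^2/(2\sigma^2)$ and variance $\|v\|_2^2/\sigma^2$. It then suffices to show $\Pr[Z>\epsilon]\leq \delta$. Standardizing and using $\|v\|_2 \leq \Delta_f$, this reduces to bounding
\begin{equation*}
\Pr\!\left[\mathcal{N}(0,1) > \frac{\epsilon\sigma}{\|v\|_2} - \frac{\|v\|_2}{2\sigma}\right] \leq \delta.
\end{equation*}
The worst case over the direction and magnitude of $v$ is attained when $\|v\|_2 = \Delta_f$, which by the sensitivity bound is always admissible, so I can substitute $\|v\|_2 = \Delta_f$ without loss.

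Finally I would apply the standard Mills-ratio-type tail inequality $\Pr[\mathcal{N}(0,1) > t] \leq \frac{1}{t\sqrt{2\pi}}\exp(-t^2/2)$ and plug in $\sigma = \sqrt{2\ln(1.25/\delta)}\,\Delta_f/\epsilon$, reducing the claim to a purely numerical inequality in $\epsilon$ and $\delta$. Expanding, the exponent becomes $-\tfrac12\bigl(\sqrt{2\ln(1.25/\delta)} - \tfrac{\epsilon}{\sqrt{2\ln(1.25/\delta)}}\bigr)^2$, which after algebraic simplification is dominated by $\delta/1.25$ times a polynomial prefactor that, with the $1/\sqrt{2\pi}$ and the $1/t$ from the Mills bound, is at most $1.25$ for all $\epsilon,\delta\in(0,1)$.

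The main obstacle is the final scalar inequality: showing that the constant $1.25$ inside the logarithm is exactly what makes the prefactor times $\exp(-t^2/2)$ come out to at most $\delta$ \emph{uniformly} over the admissible range of $\epsilon,\delta$. This is the delicate numerical step in the Dwork--Roth analysis and is the only part where a careful case split on the size of $\epsilon$ (or a direct monotonicity argument in $\sigma$) is required; the preceding reduction to a one-dimensional Gaussian tail is essentially mechanical thanks to the rotational invariance of the isotropic Gaussian noise.
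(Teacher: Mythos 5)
The paper does not actually prove this statement: Theorem~\ref{gaussian} is quoted as a known result (the Gaussian mechanism of Dwork and Roth), so there is no in-paper proof to compare against. Your reduction --- writing the privacy loss as $Z(o)=\langle o-f(D),v\rangle/\sigma^2+\|v\|_2^2/(2\sigma^2)$, observing that under $\mathcal{M}(D)$ it is a one-dimensional Gaussian with mean $\mu=\|v\|_2^2/(2\sigma^2)$ and variance $2\mu$, and noting that the tail probability is monotone in $\|v\|_2$ so the worst case is $\|v\|_2=\Delta_f$ --- is correct and is exactly the standard route.

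Two points, however, are genuine gaps. First, Definition~1 of the paper requires $|Z(o)|\le\epsilon$ with probability $1-\delta$, a two-sided event, whereas you only target $\Pr[Z>\epsilon]\le\delta$. You should either argue that the lower tail $\Pr[Z<-\epsilon]$ is dominated by the upper tail (true, since the mean $\mu>0$) and therefore aim for $\Pr[Z>\epsilon]\le\delta/2$, or else note that the one-sided bound for each ordered pair $(D,\hat D)$ already yields the standard $(\epsilon,\delta)$ inequality; the constant $1.25$ is calibrated precisely for the $\delta/2$ target, so silently replacing it by $\delta$ changes the arithmetic. Second, and more seriously, the ``purely numerical inequality'' you defer is the entire content of the theorem, and your sketch of how it resolves is not correct as stated: with $c=\sqrt{2\ln(1.25/\delta)}$ the standardized threshold is $t=c-\epsilon/(2c)$, and when $\delta$ is close to $1$ (so $c$ is small) and $\epsilon$ is close to $1$, $t$ can be arbitrarily small or even negative, so the Mills prefactor $\frac{1}{t\sqrt{2\pi}}$ is not ``at most $1.25$'' --- it is unbounded, and the tail bound you invoke becomes vacuous. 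The Dwork--Roth argument gets around this by first establishing a lower bound $t\ge 1$ from the hypotheses (which is where the restriction $\epsilon<1$ is actually used, and which implicitly constrains $\delta$ as well); without reproducing that step, the proof does not close. So the outline is the right one, but the step you label as ``mechanical to defer'' is the one place where the claim could fail, and your description of it is wrong in the regime $\delta\to 1$.
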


A relaxed version of differential privacy called zero-concentrated differential privacy (zCDP) is proposed in \cite{bun2016concentrated}, which aims to make significantly
tighter privacy bounds for privacy preserving iterative algorithms.
The definition of $\rho$-zCDP is defined as follows.
\begin{definition}[\bf{$\rho$-zCDP}]
For all $\tau \in (1,\infty)$, a randomized algorithm $\mathcal{M}$ is $\rho$-zCDP if for all neighboring datasets $D$ and $\hat{D}$, we have 
\begin{align*}
    \mathbb{E}[e^{(\tau -1)Z(o)}] \leq e^{(\tau -1)\tau \rho},
\end{align*}
where $Z{(o)}$ is the privacy loss variable of an outcome of $\mathcal{M}$.
\end{definition}
The following three lemmas~\cite{bun2016concentrated} shows the Gaussian mechanism satisfies zCDP and the composition theorem of $\rho$-zCDP and the relationship between $\rho$-zCDP and $(\epsilon,\delta)$-DP.
\begin{lemma}
The Gaussian mechanism, defined in Definition~\ref{gaussian}, satisfies $ \Delta_f^2/ (2\sigma^2) $-zCDP.
\end{lemma}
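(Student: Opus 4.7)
The plan is to unpack the definition of $\rho$-zCDP in terms of the Rényi divergence of order $\tau$ between the output distributions on two neighboring datasets, and then invoke the closed-form expression for the Rényi divergence between two Gaussians that share a covariance matrix.

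First I would fix two neighboring datasets $D$ and $\hat{D}$ and observe that under the Gaussian mechanism the outputs $\mathcal{M}(D)$ and $\mathcal{M}(\hat{D})$ are distributed as $\mathcal{N}(f(D),\sigma^2 I_d)$ and $\mathcal{N}(f(\hat{D}),\sigma^2 I_d)$ respectively. Writing the densities explicitly, I would form the log-ratio that defines the privacy loss $Z(o)$ and note that the quadratic terms cancel, leaving a linear-in-$o$ expression plus a constant depending on $\|f(D)-f(\hat{D})\|_2^2$. The defining inequality of $\rho$-zCDP, namely $\mathbb{E}[e^{(\tau-1)Z(o)}]\leq e^{(\tau-1)\tau\rho}$, is exactly the statement that the Rényi divergence of order $\tau$ satisfies $D_\tau(\mathcal{M}(D)\|\mathcal{M}(\hat{D}))\leq \tau\rho$, so it suffices to bound this divergence.

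Next I would carry out (or cite) the standard Gaussian integral computation which yields, for any two multivariate normals with common covariance,
\begin{equation*}
D_\tau\bigl(\mathcal{N}(\mu_1,\sigma^2 I_d)\,\|\,\mathcal{N}(\mu_2,\sigma^2 I_d)\bigr)=\frac{\tau\,\|\mu_1-\mu_2\|_2^2}{2\sigma^2}.
\end{equation*}
Applying this with $\mu_1=f(D)$, $\mu_2=f(\hat{D})$ and invoking Definition~\ref{sens} to bound $\|f(D)-f(\hat{D})\|_2\leq\Delta_f$, I would obtain $D_\tau\leq\tau\Delta_f^2/(2\sigma^2)$. Identifying $\rho=\Delta_f^2/(2\sigma^2)$ yields $\mathbb{E}[e^{(\tau-1)Z(o)}]\leq e^{(\tau-1)\tau\rho}$ for every $\tau\in(1,\infty)$, which is precisely the $\rho$-zCDP condition.

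The only non-routine step is the Gaussian Rényi-divergence computation; everything else is definition-chasing. I would either present it in one or two lines by completing the square inside the moment-generating-function integral, or simply cite the standard result, since for same-covariance Gaussians the calculation is textbook. The sensitivity bound enters only at the very last inequality, so there is no subtle interaction between the privacy parameters and the noise scale beyond the algebra above.
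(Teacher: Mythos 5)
Your proof is correct and is exactly the standard argument: the paper does not prove this lemma itself (it cites Bun--Steinke), but the identical computation --- identifying the zCDP condition with a R\'enyi-divergence bound and using $D_\tau\bigl(\mathcal{N}(\mu_1,\sigma^2 I_d)\,\|\,\mathcal{N}(\mu_2,\sigma^2 I_d)\bigr)=\tau\|\mu_1-\mu_2\|_2^2/(2\sigma^2)$ together with the sensitivity bound --- is precisely what the paper carries out in its proof of Theorem~2. So your proposal matches the paper's approach; no gaps.
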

\begin{lemma}\label{comp}
If randomized mechanisms $\mathcal{M}_1,..., \mathcal{M}_k$ satisfies $\rho_1$-zCDP,...,$\rho_k$-zCDP, their composition defined as $(\mathcal{M}_1,...,\mathcal{M}_k)$ is $\sum_{i =1}^k\rho_k$-zCDP.
\end{lemma}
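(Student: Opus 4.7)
The plan is to prove the composition result by working directly with the moment generating function characterization of zCDP, exploiting the fact that the privacy loss of the composition decomposes as a sum over the component mechanisms and that the zCDP condition is precisely an MGF bound that tensorizes well under summation.

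First, I would fix two neighboring datasets $D$ and $\hat{D}$ and consider the composed mechanism $\mathcal{M}(D) = (\mathcal{M}_1(D), \ldots, \mathcal{M}_k(D))$, with output $o = (o_1, \ldots, o_k)$. I would write the privacy loss random variable of the composition using the chain rule for densities, namely
\begin{equation*}
Z(o) = \ln \frac{\Pr[\mathcal{M}(D) = o]}{\Pr[\mathcal{M}(\hat{D}) = o]} = \sum_{i=1}^{k} \ln \frac{\Pr[\mathcal{M}_i(D) = o_i \mid o_1,\ldots,o_{i-1}]}{\Pr[\mathcal{M}_i(\hat{D}) = o_i \mid o_1,\ldots,o_{i-1}]} = \sum_{i=1}^{k} Z_i(o_i),
\end{equation*}
so the composed privacy loss is the sum of the component privacy losses. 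This is the key structural identity that turns composition into an additive problem.

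Next, for any $\tau \in (1,\infty)$, I would take the MGF of $(\tau-1) Z(o)$ and peel off the outermost factor using the tower property of conditional expectation:
\begin{equation*}
\mathbb{E}\bigl[e^{(\tau-1)Z(o)}\bigr] = \mathbb{E}\!\left[\prod_{i=1}^{k} e^{(\tau-1)Z_i(o_i)}\right] = \mathbb{E}\!\left[\Bigl(\prod_{i=1}^{k-1} e^{(\tau-1)Z_i(o_i)}\Bigr)\,\mathbb{E}\bigl[e^{(\tau-1)Z_k(o_k)} \mid o_1,\ldots,o_{k-1}\bigr]\right].
\end{equation*}
Because $\mathcal{M}_k$ (even when it adaptively depends on the earlier outputs) is $\rho_k$-zCDP on every realization of the history, the inner conditional expectation is bounded by $e^{(\tau-1)\tau \rho_k}$, which is a deterministic constant that can be pulled out. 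Iterating this argument $k$ times gives
\begin{equation*}
\mathbb{E}\bigl[e^{(\tau-1)Z(o)}\bigr] \;\le\; \prod_{i=1}^{k} e^{(\tau-1)\tau \rho_i} \;=\; e^{(\tau-1)\tau \sum_{i=1}^{k}\rho_i},
\end{equation*}
which is exactly the $\sum_i \rho_i$-zCDP condition for the composed mechanism.

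The main obstacle I anticipate is setting up the adaptive case cleanly. The non-adaptive case (independent mechanisms) is immediate since the MGF of a sum of independent random variables factorizes; the delicate point is that in the adaptive setting one must verify that the zCDP hypothesis for $\mathcal{M}_i$ continues to hold after conditioning on $o_1,\ldots,o_{i-1}$, so that the MGF bound applies inside the conditional expectation. Once this is articulated carefully, the telescoping argument above finishes the proof.
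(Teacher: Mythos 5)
Your argument is correct and is essentially the standard proof of zCDP composition: the paper itself does not prove this lemma but imports it from the cited reference, whose proof proceeds exactly as you describe (chain-rule decomposition of the privacy loss into a sum, then the tower property to peel off one conditional MGF bound at a time). Your attention to the adaptive case --- verifying that the $\rho_i$-zCDP bound holds conditionally on the realized history so the inner expectation can be bounded by a constant --- is precisely the delicate point, and you handle it correctly.
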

\begin{lemma}\label{cdptodp}
If a randomized mechanism $\mathcal{M}$ provides $\rho$-zCDP, then for any $\delta \in (0,1)$ $\mathcal{M}$ satisfies $(\rho + 2\sqrt{\rho \ln(1/\delta)}, \delta)$-differential privacy.
\end{lemma}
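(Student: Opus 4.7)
The plan is to prove this via a Chernoff-style tail bound on the privacy loss random variable $Z(o)$, exploiting the moment generating function control that $\rho$-zCDP provides. By the definition of $(\epsilon,\delta)$-DP (as stated above), it suffices to show that with $\epsilon = \rho + 2\sqrt{\rho \ln(1/\delta)}$, we have $\Pr[Z(o) > \epsilon] \leq \delta$ for any neighboring datasets $D,\hat{D}$.

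First I would fix any $\tau > 1$ and apply Markov's inequality to the nonnegative random variable $e^{(\tau-1)Z(o)}$:
\begin{align*}
\Pr[Z(o) > \epsilon] = \Pr\bigl[e^{(\tau-1)Z(o)} > e^{(\tau-1)\epsilon}\bigr] \leq \frac{\mathbb{E}[e^{(\tau-1)Z(o)}]}{e^{(\tau-1)\epsilon}} \leq e^{(\tau-1)\tau\rho - (\tau-1)\epsilon},
\end{align*}
where the last inequality is exactly the $\rho$-zCDP assumption. The exponent $(\tau-1)(\tau\rho - \epsilon)$ is a quadratic in $\tau$, so I would minimize it over $\tau > 1$ by setting the derivative to zero. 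A quick calculation gives the optimizer $\tau^\star = (\epsilon + \rho)/(2\rho)$, which satisfies $\tau^\star > 1$ precisely when $\epsilon > \rho$, and this will be guaranteed by the target choice of $\epsilon$ since $2\sqrt{\rho\ln(1/\delta)} > 0$.

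Substituting $\tau^\star$ back into the exponent, I expect the expression $(\tau^\star - 1)(\tau^\star \rho - \epsilon)$ to collapse to $-(\epsilon-\rho)^2/(4\rho)$, yielding the clean tail bound
\begin{align*}
\Pr[Z(o) > \epsilon] \leq \exp\!\left(-\frac{(\epsilon-\rho)^2}{4\rho}\right).
\end{align*}
Finally, I would solve $(\epsilon-\rho)^2/(4\rho) = \ln(1/\delta)$ for $\epsilon$, which gives exactly $\epsilon = \rho + 2\sqrt{\rho\ln(1/\delta)}$, making the right-hand side equal to $\delta$ and completing the proof.

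The main obstacle is essentially bookkeeping rather than conceptual: one must verify that the optimizer $\tau^\star$ lies in the admissible range $(1,\infty)$ required by the $\rho$-zCDP definition, and that the quadratic-in-$\tau$ minimization is carried out without algebraic slip. Apart from checking $\epsilon > \rho$, every step is a direct application of Markov's inequality and elementary calculus, so no extra machinery beyond the definition of $\rho$-zCDP is needed.
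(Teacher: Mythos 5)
Your proof is correct: the Markov bound on $e^{(\tau-1)Z(o)}$, the optimization $\tau^\star=(\epsilon+\rho)/(2\rho)$ (admissible since $\epsilon>\rho$), the resulting tail bound $\exp(-(\epsilon-\rho)^2/(4\rho))$, and the final solve for $\epsilon$ all check out, and under this paper's tail-bound definition of $(\epsilon,\delta)$-DP the argument is complete. The paper does not prove this lemma itself but imports it from the cited reference on zero-concentrated differential privacy, and your argument is exactly the standard Chernoff-style derivation given there.
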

\subsection{Privacy Concerns}
During the iterative process, there is no need for each agent to share its own dataset, but the risk of information leakage still exists if the local datasets contain sensitive information like medical or financial records. We assume the adversary's goal is to learn the private information of training datasets. The adversary is able to access all shared variables $\{x_i^k\}_{i=1}^N$ by eavesdropping the communications among agents and also have arbitrary auxiliary information. After obtaining exchanged variables during iterative process, the adversary can perform attacks (e.g., model inversion attack \cite{fredrikson2015model}) to infer private information of the training datasets. 

\section{Differentially Private ADMM}\label{Private ADMM}

To mitigate the privacy risk, we propose a new relaxed version of dynamic differential privacy~\cite{zhang2016dual}, called dynamic zero-concentrated differential privacy (dynamic zCDP) to provide privacy guarantee of the iterative algorithm. 
\begin{algorithm}
\caption{Differentially Private ADMM (P-ADMM) }
\label{alg:PADMM}
\algsetup{indent=2em}
\begin{algorithmic}[1]
 
\STATE \textbf{Input:} datasets $\{D_i\}_{i=1}^N$; randomly initialize $x_i^0$ and $\alpha_i^0 = 0_{d}$; initial variances ${(\sigma^2)}_i^{1}$ and variance decrease rate $R\in(0,1)$ for all agents $i$;
\FOR {$k = 0,1,\cdots,K-1$}
\FOR {$i = 1,\cdots,N$}
\STATE  Compute $x_i^{k+1}$ via (\ref{u-xx}).
\STATE  Generate noise $\xi_i^{k+1}\sim \mathcal{N}(0, {(\sigma^2)}_i^{k+1} I_{d})$.
\STATE  Perturb $x_i^{k+1}$: $\tilde{x}_i^{k+1} = x_i^{k+1} +\xi_i^{k+1}$.
\ENDFOR
\FOR {$i = 1,\cdots,N$} 
\STATE Broadcast $ \tilde{x}_i^{k+1}$ to all neighbors $j\in \mathcal{V}_i$.
\ENDFOR
\FOR{$i = 1,\cdots,N$}
\STATE Compute $\alpha_i^{k+1}$ via (\ref{ua}).
\ENDFOR 
\ENDFOR
\STATE \textbf{Output:} $\{\tilde{x}_i^K\}_{i=1}^N$
\end{algorithmic}
\end{algorithm}
\begin{definition}[\bf{Dynamic $\rho^k$-zCDP}]
Consider a connected network $\mathcal{G} =\{\mathcal{N}, \mathcal{E}\}$ that contains a set of agents/nodes $\mathcal{N} = \{ 1,\cdots, N \}$ and each agent possesses a training dataset $D_i$. We denote $x_i^k$ as the result of the $i$-th agent at $k$-iteration. A distributed algorithm gives dynamic $\rho_i^k$-zCDP if for all datasets $D_i$ and $\hat{D}_i$ differing at most a single record, and for all agents $i \in \mathcal{N}$, and for all $k$ during a learning process, the privacy loss variable of an outcome $o$
\begin{align*}
    Z_i^{k}(o) = \ln\frac{{\rm Pr}[x_{i,D_i}^k= o  ] }{{\rm Pr}[x_{i,\hat{D}_i}^k= o ]}
\end{align*}
satisfies 
\begin{align*}
    \mathbb{E}[e^{(\tau -1)Z_i^{k}(o)}] \leq e^{(\tau -1)\tau \rho_i^k},
\end{align*}
$\forall \tau \in (1,\infty)$.
\end{definition}

Compared with dynamic differential privacy proposed in~\cite{zhang2016dual}, dynamic zero-concentrated differential privacy directly inherits desirable properties of zero-concentrated differential privacy such as enjoying better accuracy and providing significantly tighter privacy loss bound under composition.

Based on the definition of dynamic zCDP, we propose a differentially private ADMM algorithm (P-ADMM) by injecting Gaussian noise in the iterative algorithm. As shown in Algorithm~\ref{alg:PADMM}, the main idea of P-ADMM is to perturb the local primal variables $\{x_i^{k+1}\}_{i =1}^N$ before broadcasting them to the neighbors. The whole procedure is summarized as follows: at iteration $k+1$, each agent $i$ obtains the primal variable $x_i^{k+1}$ by solving the subproblem in~(\ref{u-xx}) over its own dataset $D_i$. Then, each agent generates a random noise vector $\xi_i^{k+1}$ drawn from a Gaussian distribution $\mathcal{N}(0, {(\sigma^2)}_i^{k+1} I_{d})$. After that, the primal variable $x_i^{k+1}$ is perturbed by the noise vector generated from the $i$-th agent, and then agent $i$ sends the perturbed primal variable $\tilde{x}_i^{k+1}$ to all the neighboring agents $j \in \mathcal{V}_i$. Finally, each agent updates the dual variable $\alpha_i^{k+1}$ through 
\begin{align}\label{ua}
    \alpha_i^{k+1}=\alpha_i^{k}+\eta \left(|\mathcal{V}_i|\tilde{x}_i^{k+1} - \sum_{j\in \mathcal{V}_i }\tilde{x}_j^{k+1}\right).
\end{align}
During the whole iterative process, we assume that there is a rate $R \in (0,1)$ to describe the decrease of noise variance ${(\sigma^2)}_i^{k+1}$. In other words, we consider a Gaussian mechanism with a linearly decaying variance to provide differential privacy guarantee and achieve a significantly fast convergence rate. 

Before showing P-ADMM satisfies dynamic zCDP, we first estimate the sensitivity of the local primal variable $x_i^{k+1}$ as shown in the following lemma.
\begin{lemma}
Assume the $\|\nabla f_i(x)\|_2 \leq V$, and then the sensitivity of local primal variable $x_i^{k+1}$, denoted by $\Delta_i$, is $\frac{V}{ \eta|\mathcal{V}_i|}$.
\end{lemma}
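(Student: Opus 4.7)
The plan is to treat $x_i^{k+1}$ as the unique minimizer of the strongly convex subproblem whose first-order optimality condition is precisely (\ref{u-xx}), i.e.\ the minimizer of
$$\Phi(x) \;=\; f_i(x) + \langle \alpha_i^k,x\rangle + \eta|\mathcal{V}_i|\|x\|_2^2 - \eta\Bigl\langle |\mathcal{V}_i|x_i^k + \sum_{j\in\mathcal{V}_i}x_j^k,\, x\Bigr\rangle.$$
Fix the previous-iterate inputs $\alpha_i^k,x_i^k,\{x_j^k\}_{j\in\mathcal{V}_i}$ and let $\hat{x}_i^{k+1}$ denote the corresponding minimizer when $f_i$ is replaced by $\hat{f}_i$, the empirical risk built on a neighboring dataset $\hat{D}_i$. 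The quadratic penalty term contributes a Hessian of $2\eta|\mathcal{V}_i| I_d$, so the subproblem is at least $2\eta|\mathcal{V}_i|$-strongly convex whenever $f_i$ is convex, and this is the curvature I intend to exploit.

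Next I would subtract the two optimality conditions (which share every term outside $\nabla f_i$ and $\nabla\hat{f}_i$) to obtain
$$\nabla f_i(x_i^{k+1}) - \nabla \hat{f}_i(\hat{x}_i^{k+1}) + 2\eta|\mathcal{V}_i|\bigl(x_i^{k+1} - \hat{x}_i^{k+1}\bigr) = 0.$$
Splitting $\nabla f_i(x_i^{k+1}) - \nabla\hat{f}_i(\hat{x}_i^{k+1})$ as $\bigl[\nabla f_i(x_i^{k+1}) - \nabla f_i(\hat{x}_i^{k+1})\bigr] + \bigl[\nabla f_i(\hat{x}_i^{k+1}) - \nabla\hat{f}_i(\hat{x}_i^{k+1})\bigr]$, take the inner product with $(x_i^{k+1}-\hat{x}_i^{k+1})$. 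The first bracket contributes a nonnegative quantity by monotonicity of $\nabla f_i$, while the second is bounded in norm by $\|\nabla f_i(\hat{x}_i^{k+1})\|_2 + \|\nabla\hat{f}_i(\hat{x}_i^{k+1})\|_2 \leq 2V$ using the assumed gradient bound.

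Combining these estimates yields
$$2\eta|\mathcal{V}_i|\,\|x_i^{k+1}-\hat{x}_i^{k+1}\|_2^2 \;\leq\; 2V\,\|x_i^{k+1}-\hat{x}_i^{k+1}\|_2,$$
and dividing through by $2\eta|\mathcal{V}_i|\|x_i^{k+1}-\hat{x}_i^{k+1}\|_2$ gives $\|x_i^{k+1}-\hat{x}_i^{k+1}\|_2 \leq V/(\eta|\mathcal{V}_i|)$, which, by Definition~\ref{sens}, is exactly the claimed sensitivity $\Delta_i$.

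The main obstacle I expect is conceptual rather than technical: handling the fact that $\nabla f_i$ and $\nabla\hat{f}_i$ are evaluated at different arguments, so one cannot directly say ``the gradients differ by at most $2V$.'' The $\pm\nabla f_i(\hat{x}_i^{k+1})$ decomposition above sidesteps this cleanly, converting the awkward cross difference into a monotone term (which is discarded) plus a same-point gradient difference (which is bounded by $2V$). A minor hidden assumption worth flagging is that $f_i$ must be convex for monotonicity to apply; the subproblem inherits its $2\eta|\mathcal{V}_i|$ strong convexity solely from the quadratic penalty, so no strong convexity of $f_i$ itself is needed for the sensitivity claim.
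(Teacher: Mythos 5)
Your proof is correct, and it takes a genuinely different route from the paper's. The paper rearranges the implicit optimality condition (\ref{u-xx}) to write $x_{i,D_i}^{k+1}$ and $x_{i,\hat D_i}^{k+1}$ "explicitly" as $-\frac{1}{2\eta|\mathcal{V}_i|}\nabla f_i(x_i^{k+1},\cdot)$ plus dataset-independent terms, subtracts, and bounds $\frac{1}{2\eta|\mathcal{V}_i|}\|\nabla f_i(x_i^{k+1},D_i)-\nabla f_i(x_i^{k+1},\hat D_i)\|_2\le \frac{V}{\eta|\mathcal{V}_i|}$ --- notably evaluating both gradients at the \emph{same} argument $x_i^{k+1}$, even though the two minimizers differ across neighboring datasets. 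That is exactly the gap you flag as the "awkward cross difference," and the paper simply elides it. Your argument --- subtracting the two first-order conditions, peeling off the monotone term $\langle\nabla f_i(x_i^{k+1})-\nabla f_i(\hat x_i^{k+1}),\,x_i^{k+1}-\hat x_i^{k+1}\rangle\ge 0$, and charging the same-point difference $\|\nabla f_i(\hat x_i^{k+1})-\nabla\hat f_i(\hat x_i^{k+1})\|_2\le 2V$ against the $2\eta|\mathcal{V}_i|$ strong convexity contributed by the quadratic penalty --- handles this rigorously and lands on the identical constant $V/(\eta|\mathcal{V}_i|)$. What the paper's route buys is brevity; what yours buys is an actually valid derivation, at the mild and correctly flagged price of assuming $f_i$ convex, which the paper assumes throughout its convergence analysis anyway. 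One cosmetic remark: both you and the paper bound the same-point gradient difference over neighboring datasets by $2V$ via the triangle inequality on $\|\nabla f_i\|_2\le V$; since $f_i$ averages per-sample losses over $|D_i|$ records and the datasets differ in one record, a per-sample gradient bound would give the sharper $2V'/|D_i|$, but matching the paper's stated constant your looser bound is the intended one.
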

\begin{proof}
According to subproblem (\ref{u-xx}) and sensitivity definition (\ref{sens}), for two neighboring datasets $D_i$ and $\hat{D}_i$, we have
\begin{align*}
  x_{i,D_i}^{k+1} = -& \frac{1}{2 \eta|\mathcal{V}_i|}\nabla  f_i(x_i^{k+1}  ,D_i) \\ &+\frac{1}{2|\mathcal{V}_i|}\left(|\mathcal{V}_i|x_i^k + \sum_{j\in \mathcal{V}_i }x_j^k\right) - \frac{1}{2 \eta|\mathcal{V}_i|} \alpha_i^k ,
\end{align*}
\begin{align*}
  x_{i,\hat{D}_i}^{k+1} = -& \frac{1}{2 \eta|\mathcal{V}_i|}\nabla   f_i(x_i^{k+1}  ,\hat{D}_i) \\ &+\frac{1}{2|\mathcal{V}_i|}\left(|\mathcal{V}_i|x_i^k + \sum_{j\in \mathcal{V}_i }x_j^k\right) - \frac{1}{2 \eta|\mathcal{V}_i|} \alpha_i^k .
\end{align*}
The sensitivity of $x_i^{k+1}$ is
\begin{align*}
\Delta_i &= \|x_{i,D_i}^{k+1}-x_{i,\hat{D}_i}^{k+1} \|_2 \\
&= \frac{1}{2 \eta|\mathcal{V}_i|}\|\nabla f_i(x_i^{k+1}  ,D_i) - \nabla f_i(x_i^{k+1}  ,\hat{D}_i)\|_2 \\
&\leq \frac{V}{ \eta|\mathcal{V}_i|}.\tag*{\qedhere} 
\end{align*}
\end{proof}
The following theorems show the privacy guarantee of P-ADMM.
\begin{theorem}\label{PPADMM}
The P-ADMM algorithm satisfies the dynamic $\rho_i^{k+1}$-zCDP, where $\rho_i^{k+1} = \frac{V^2}{2\eta^2 |\mathcal{V}_i|^2 (\sigma^2)_i^{k+1}}$.
\end{theorem}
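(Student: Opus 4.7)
The plan is to reduce the claim to a single-step Gaussian mechanism argument and then match the resulting bound to the definition of dynamic $\rho^k$-zCDP. Concretely, I would rewrite the $x$-update (\ref{u-xx}) in closed form as
\begin{equation*}
  x_i^{k+1} = -\tfrac{1}{2\eta|\mathcal{V}_i|}\nabla f_i(x_i^{k+1},D_i) + \tfrac{1}{2|\mathcal{V}_i|}\bigl(|\mathcal{V}_i|x_i^k + \sum_{j\in\mathcal{V}_i}x_j^k\bigr) - \tfrac{1}{2\eta|\mathcal{V}_i|}\alpha_i^k,
\end{equation*}
and view the released quantity $\tilde x_i^{k+1}=x_i^{k+1}+\xi_i^{k+1}$ with $\xi_i^{k+1}\sim\mathcal{N}(0,(\sigma^2)_i^{k+1}I_d)$ as the Gaussian mechanism applied to the query $q_i:D_i\mapsto x_i^{k+1}$, conditioned on the previous local state $(x_i^k,\alpha_i^k)$ and the neighbors' previous iterates $\{x_j^k\}_{j\in\mathcal{V}_i}$.

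Second, I would import the sensitivity bound just established, $\Delta_i=\|x_{i,D_i}^{k+1}-x_{i,\hat D_i}^{k+1}\|_2\le V/(\eta|\mathcal{V}_i|)$, noting that when $D_i$ changes to a neighboring $\hat D_i$ the shift term $\tfrac{1}{2|\mathcal{V}_i|}(|\mathcal{V}_i|x_i^k+\sum_{j}x_j^k)-\tfrac{1}{2\eta|\mathcal{V}_i|}\alpha_i^k$ is held fixed, so $\Delta_i$ is exactly the sensitivity of the query $q_i$. Third, I would invoke the Gaussian-mechanism zCDP lemma (the unnumbered lemma following Lemma on the Gaussian mechanism), which asserts that adding $\mathcal{N}(0,\sigma^2 I_d)$ to a query of sensitivity $\Delta_f$ yields $\Delta_f^2/(2\sigma^2)$-zCDP. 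Plugging in $\Delta_f=V/(\eta|\mathcal{V}_i|)$ and $\sigma^2=(\sigma^2)_i^{k+1}$ gives exactly
\begin{equation*}
\rho_i^{k+1} \;=\; \frac{\Delta_i^2}{2(\sigma^2)_i^{k+1}} \;=\; \frac{V^2}{2\eta^2|\mathcal{V}_i|^2(\sigma^2)_i^{k+1}}.
\end{equation*}
Finally, I would verify that this bound matches the definition of dynamic $\rho^k$-zCDP: the zCDP guarantee furnishes $\mathbb{E}[e^{(\tau-1)Z_i^{k+1}(o)}]\le e^{(\tau-1)\tau\rho_i^{k+1}}$ for every $\tau\in(1,\infty)$, which is exactly the moment-generating-function bound in the definition.

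The only real subtlety, and the step I expect to defend most carefully, is that the privacy-loss variable $Z_i^{k+1}(o)$ in the dynamic-zCDP definition is written in terms of $x_i^{k+1}$, whereas what is actually broadcast (and hence observable to the adversary described in Section on Privacy Concerns) is the perturbed iterate $\tilde x_i^{k+1}$. I would resolve this by identifying the released $\tilde x_i^{k+1}$ with the ``result'' appearing in the definition — which is consistent with the algorithm output $\{\tilde x_i^K\}$ — and by treating the prior local and neighbor iterates as fixed conditioning information when bounding the sensitivity; any dependence of $x_j^k$ on $D_i$ from earlier rounds is handled separately by the composition statement (Lemma~\ref{comp}) rather than inside this per-step bound. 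Once that interpretation is fixed, the proof is essentially a two-line application of the sensitivity lemma and the Gaussian-mechanism zCDP lemma.
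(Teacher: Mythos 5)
Your proposal is correct and follows essentially the same route as the paper: both reduce the claim to the per-iteration Gaussian mechanism applied to the query $D_i\mapsto x_i^{k+1}$ with the sensitivity bound $\Delta_i\le V/(\eta|\mathcal{V}_i|)$, the only cosmetic difference being that you invoke the Gaussian-mechanism zCDP lemma as a black box while the paper unpacks it by computing the R\'enyi divergence between the two Gaussians $\mathcal{N}(x_{i,D_i}^{k+1},(\sigma^2)_i^{k+1}I_d)$ and $\mathcal{N}(x_{i,\hat D_i}^{k+1},(\sigma^2)_i^{k+1}I_d)$ explicitly. Your closing remark about identifying the released $\tilde x_i^{k+1}$ with the ``result'' in the dynamic-zCDP definition and treating prior iterates as fixed conditioning information is exactly the (implicit) reading the paper adopts, so no gap remains.
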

\begin{proof}
The privacy loss variable of $\tilde{x}_i^{k+1}$ at an output $o$ over two neighboring datasets $D_i$ and $\hat{D}_i$ is given by
\begin{align*}
      Z_i^{k+1}(o) &= \ln\frac{{\rm Pr}[\tilde{x}_{i,D_i}^{k+1}= o] }{{\rm Pr}[\tilde{x}_{i,\hat{D}_i}^{k+1}= o ]}.
\end{align*}
Since $\tilde{x}_i^{k+1} = x_i^{k+1} + \xi_i^{k+1} $ and $\xi_i^{k+1}\sim \mathcal{N}(0, {(\sigma^2)}_i^{k+1} I_d)$, the probability distribution $\tilde{x}_{i,D_i}^{k+1}$ is  $\mathcal{N}(x_{i,D_i}^{k+1}, {(\sigma^2)}_i^{k+1} I_d)$, and the probability distribution of $\tilde{x}_{i,\hat{D}_i}^{k+1}$ is $\mathcal{N}(x_{i,\hat{D}_i}^{k+1}, {(\sigma^2)}_i^{k+1} I_d)$.
According to Lemma 2.5 in \cite{bun2016concentrated} and $\forall \tau \in (1,\infty)$, the R\'enyi divergence of this two probability distribution is given by 
\begin{align*}
    &D_{\tau}(\mathcal{N}(x_{i,D_i}^{k+1}, {(\sigma^2)}_i^{k+1} I_d) \|\mathcal{N}(x_{i,\hat{D}_i}^{k+1}, {(\sigma^2)}_i^{k+1} I_d) ) \\
    &= \frac{\tau \|x_{i,D_i}^{k+1} - x_{i,\hat{D}_i}^{k+1} \|_2^2}{{2(\sigma^2)}_i^{k+1}}\\
    & =\frac{\tau \Delta_i^2 }{{2(\sigma^2)}_i^{k+1}}.
\end{align*}
Then,  
\begin{align*}
    &\mathbb{E}[e^{(\tau -1)Z_i^{k+1}(o)}] \\
    &\leq e^{(\tau -1)     D_{\tau}(\mathcal{N}(x_{i,D_i}^{k+1} ,{(\sigma^2)}_i^{k+1} I_d) \|\mathcal{N}(x_{i,\hat{D}_i}^{k+1}, {(\sigma^2)}_i^{k+1} I_d) ) }\\
    & = e^{(\tau -1)\tau \Delta_i^2/[{{2(\sigma^2)}_i^{k+1}}]}\\
    &\leq e^{(\tau -1)\tau\frac{ V^2}{ 2\eta^2|\mathcal{V}_i|^2 {{(\sigma^2)}_i^{k+1}}}} \\
    &= e^{(\tau -1)\tau \rho_i^{k+1}}.
\end{align*}
Therefore, P-ADMM provides the dynamic $\rho_i^{k+1}$-zCDP at each agent $i$ with $\rho_i^{k+1} = \frac{V^2}{2\eta^2 |\mathcal{V}_i|^2 (\sigma^2)_i^{k+1}}$.
\end{proof}

Similar to dynamic differential privacy in~\cite{zhang2016dual}, the limitations of dynamic zero-concentrated differential privacy is that the privacy constraint exists at each iteration and the privacy loss is only calculated at a particular iteration. Since the primal variables are generated and broadcasted during the entire iterative process, then there is a great need to compute the total privacy loss during all iterations. The following theorem therefore qualifies the total privacy loss of P-ADMM by using $(\epsilon,\delta)$-differential privacy.

\begin{theorem}
For any $R\in(0,1)$ and $\delta \in(0,1)$, the P-ADMM algorithm is $(\epsilon,\delta)$-differential privacy with 
\begin{align*}
     \epsilon =\max_{i\in \mathcal{N}}~\left(\frac{\rho_i^1(1-R^K)}{R^{K-1} - R^K} + 2\sqrt{\frac{\rho_i^1(1-R^K) \ln{1/\delta}}{R^{K-1} - R^K} }\right),
\end{align*}
where $ \rho_i^{ 1} = \frac{V^2}{2\eta^2 |\mathcal{V}_i|^2 (\sigma^2)_i^{ 1}}$.
\end{theorem}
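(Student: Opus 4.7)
The plan is to bootstrap from Theorem~\ref{PPADMM}, which already gives per-iteration dynamic $\rho_i^{k+1}$-zCDP with $\rho_i^{k+1} = V^2/(2\eta^2|\mathcal{V}_i|^2 (\sigma^2)_i^{k+1})$, and then compose across the $K$ iterations using Lemma~\ref{comp} before converting the final zCDP bound to an $(\epsilon,\delta)$-DP bound via Lemma~\ref{cdptodp}. The key quantitative input that has not yet been used explicitly is the geometric decay of the noise variance: since the variance shrinks at rate $R\in(0,1)$, I would interpret ${(\sigma^2)}_i^{k+1} = R^{k}{(\sigma^2)}_i^{1}$, so that $\rho_i^{k+1} = \rho_i^{1}/R^{k}$ grows geometrically. (This is exactly where the decaying-variance schedule pays its privacy cost; later iterations leak more per step.)

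First, I would fix an agent $i$ and compose the per-iteration budgets. By Lemma~\ref{comp}, the composition of the $K$ Gaussian-noise releases for agent $i$ satisfies $\rho_i^{\text{tot}}$-zCDP with
\begin{align*}
\rho_i^{\text{tot}} = \sum_{k=1}^{K} \rho_i^{k} = \rho_i^{1}\sum_{k=0}^{K-1} R^{-k}.
\end{align*}
Next I would evaluate the geometric sum $\sum_{k=0}^{K-1} R^{-k} = (R^{-K}-1)/(R^{-1}-1)$ and, after multiplying numerator and denominator by $R^{K-1}$, rewrite it in the paper's form $(1-R^K)/(R^{K-1}-R^K)$. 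This gives
\begin{align*}
\rho_i^{\text{tot}} = \frac{\rho_i^{1}(1-R^K)}{R^{K-1}-R^K}.
\end{align*}

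Then I would invoke Lemma~\ref{cdptodp}: any $\rho$-zCDP mechanism is $(\rho + 2\sqrt{\rho\ln(1/\delta)},\delta)$-DP for any $\delta\in(0,1)$. Applying this to $\rho_i^{\text{tot}}$ yields the per-agent privacy level
\begin{align*}
\epsilon_i = \frac{\rho_i^{1}(1-R^K)}{R^{K-1}-R^K} + 2\sqrt{\frac{\rho_i^{1}(1-R^K)\ln(1/\delta)}{R^{K-1}-R^K}}.
\end{align*}
Finally, since the adversary sees every agent's output stream and the definition of $(\epsilon,\delta)$-DP must hold uniformly over agents, I would take $\epsilon=\max_{i\in\mathcal{N}}\epsilon_i$, which gives exactly the claimed expression.

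There is no real obstacle here, only two places that require care. The first is the algebraic rearrangement of the geometric series into the exact denominator $R^{K-1}-R^K$ shown in the statement; it is a one-line manipulation but must match the paper's form. The second, slightly more substantive point, is justifying that the ``maximum over agents'' step is the right aggregation: because each agent's local dataset is the unit of neighboring relation in the dynamic zCDP definition, the privacy guarantee for the full distributed algorithm is determined by the worst agent's composed bound rather than by a sum over agents. Once these two points are addressed, the result follows directly by chaining Theorem~\ref{PPADMM}, Lemma~\ref{comp}, and Lemma~\ref{cdptodp}.
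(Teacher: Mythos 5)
Your proposal is correct and follows essentially the same route as the paper: per-iteration dynamic zCDP from Theorem~\ref{PPADMM}, composition via Lemma~\ref{comp} with the geometric growth $\rho_i^{k+1}=\rho_i^{k}/R$ summing to $\rho_i^{1}(1-R^K)/(R^{K-1}-R^K)$, conversion to $(\epsilon,\delta)$-DP via Lemma~\ref{cdptodp}, and a maximum over agents. Your explicit remark on why the per-agent maximum (rather than a sum) is the right aggregation is a point the paper states without justification, but otherwise the two arguments coincide.
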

\begin{proof}
According to Theorem~\ref{PPADMM}, P-ADMM satisfies dynamic $\rho_i^{k+1}$-zCDP. It ensures that each primal variable $x_i^{k+1}$ perturbed by noise drawn from the Gaussian distribution $\mathcal{N}(0, {(\sigma^2)}_i^{k+1} I_{d})$ is $\rho_i^{k+1}$-zCDP at $k+1$ iteration. 
By the composition theorem in Lemma~\ref{comp} and for each agent, P-ADMM provides $\sum_{k=0}^{K-1}\rho_i^{k+1}$-zCDP. Since we assume the noise variance ${(\sigma^2)}_i^{k+1} = R {(\sigma^2)}_i^{k}$, together with the result in Theorem \ref{PPADMM}, the relationship between $\rho_i^{k+1}$ and $\rho_i^{k}$ is $\rho_i^{k+1} = \rho_i^{k}/R$, and P-ADMM is $\rho_i^{total}$-zCDP for each agent with $\rho_i^{total} = \rho_i^1\frac{1-R^K}{R^{K-1} - R^K}$. By Lemma \ref{cdptodp} and $\forall \delta \in (0,1)$, P-ADMM satisfies ($\epsilon_i^{total},\delta$)-differential privacy, where
\begin{align*}
    \epsilon_i^{total} &= \rho_i^{total} + 2\sqrt{\rho_i^{total} \ln{1/\delta}} \\
    &= \frac{\rho_i^1(1-R^K)}{R^{K-1} - R^K} + 2\sqrt{\frac{\rho_i^1(1-R^K) \ln{1/\delta}}{R^{K-1} - R^K} }.
\end{align*}
Therefore, considering all of agents, the total privacy loss of P-ADMM is bounded by $(\epsilon,\delta)$-differential privacy with 
\begin{align*}
     \epsilon &=\max_{i\in \mathcal{N}}~ \epsilon_i^{total} \\
     &=\max_{i\in \mathcal{N}}~\left(\frac{\rho_i^1(1-R^K)}{R^{K-1} - R^K} + 2\sqrt{\frac{\rho_i^1(1-R^K ) \ln{1/\delta}}{R^{K-1} - R^K} }\right),
\end{align*}
where $ \rho_i^{ 1} = \frac{V^2}{2\eta^2 |\mathcal{V}_i|^2 (\sigma^2)_i^{ 1}}$, $R \in (0,1)$, and $\delta \in (0,1)$.
\end{proof}

\section{Convergence Analysis}\label{convergence analysis}
In this section, we present the convergence analysis of proposed differential private ADMM algorithm for both strongly convex and general convex objective functions. For a clear presentation, we also present the updates of P-ADMM in matrix format. 

At $k+1$ iteration, since each agent uses the computational result of previous iteration $\tilde{x}^k$ to update variable $x^{k+1}$, and then sends to its neighbors after perturbed, the matrix form of P-ADMM algorithm updates are 
\begin{align}
     \nabla f(x^{k+1}) + \alpha^k +2\eta Mx^{k+1} - \eta L_{+}\tilde{x}^k  &=0 \label{u-xxx},\\
    \alpha^{k+1}-\alpha^{k}-\eta L_{-}\tilde{x}^{k+1} &= 0 ,\label{u-aaa}
\end{align}
where $\tilde{x}^k = x^k + \xi^{k}$ and $\xi^{k} \in \mathbb{R}^{Nd}$ is a vector concatenating all noise variables $\xi_i^{k}$.

Before stating our main results, we first define two auxiliary sequences denoted by $r^k$ and $q^k$. Given the perturbed primal variable $\tilde{x}^{k}$, we let 
\begin{align*}
    r^k = \sum_{s = 0}^{k}Q\tilde{x}^{s}, ~q^k =  \begin{pmatrix}
    r^k \\ x^k
    \end{pmatrix},
    ~G = 
    \begin{pmatrix}
    \eta I &0 \\ 0 &\eta L_+ /2
    \end{pmatrix},
\end{align*}
where $Q= \sqrt{L_-/2}$.
Remember $L_-$ is the Laplcaian matrix and the network is connected, the following properties holds: $L_-$ is positive semi-definite and $Null(L_-) = span\{1\}$. As a result, $Null(Q) = span\{1\}$. We also denote $\phi_{max}(\hat{G})$ and $\phi_{min}(\hat{G})$ as the nonzero largest and smallest singular value for a semidefinite matrix $\hat{G}$, respectively.

Using the second update (\ref{u-aaa}) of the matrix form of P-ADMM, the first step (\ref{u-xxx}) can be written as
\begin{align}\label{le1}
    x^{k+1} = -\frac{M^{-1} \nabla f(x^{k+1})}{2\eta} + \frac{M^{-1}L_+ \tilde{x}^k}{2}  - \frac{M^{-1 }L_-}{2} \sum_{s=0}^k \tilde{x}^s.
\end{align}
Based on the auxiliary sequence $r^k$ and the fact $M = (L_-+L_+)/2$, the above equation (\ref{le1}) can be rewritten as
\begin{align}\label{lem2}
    \frac{\nabla f(x^{k+1})}{\eta} +2Qr^{k+1} + L_+(\tilde{x}^{k+1} -\tilde{x}^{k } ) = 2M^{-1} \xi^{k+1}.
\end{align}

\subsection{Strongly Convex Case}
In this case, we consider the objective function $f(x)$ is $\mu$-strongly convex and $v$-smooth. Then, according to the definition of smoothness and strong convexity, we have
\begin{align*}
    f(x^{k+1}) \geq f(x^*) + \left< \nabla f(x^*), x^{k+1}-x^*\right> + \mu\|x^{k+1} - x^*\|_2^2
\end{align*}
and 
\begin{align*}
      \|\nabla f(x^{k+1}) -\nabla f(x^{*})\|_2 \leq v\| x^{k+1}-x^*\|_2 ,
\end{align*}
where we denote $x^*$ as an optimal solution of (\ref{eq:nc-mat}).

Before showing the convergence of P-ADMM, we have the following lemma.
\begin{lemma}[\cite{lirobust}]\label{delta-coverge}
There exists $\hat{q}^* = \begin{pmatrix}  r^* \\ x^* \end{pmatrix}$ for the $r^*$ found in Lemma 4 of \cite{7870649} such that 
\begin{align*}
    &\|q^{k+1} - \hat{q}^* \|_G^2 \\
    & \leq \frac{ \|q^{k } - \hat{q}^* \|_G^2}{1+\zeta} + \frac{P}{1+\zeta} \|\xi^{k+1}\|_2^2 + \frac{1}{1+\zeta} \bigl< \xi^{k+1},  \\
    &~~~\eta L_+ (\tilde{x}^{k+1} - \tilde{x}^k) + 2\eta Q(r^{k+1}- r^*)+ 2\eta M (x^{k+1} - x^*) \bigl>
\end{align*}
with $\zeta = \frac{2(\kappa_2 -1)\mu \phi_{min}^2(Q)\phi_{min}^2(L_+)}{\kappa_2 v^2 \phi_{min}^2(L_+) + 2\mu\phi_{max}^2(L_+)}$, $P=\frac{\zeta \eta^2\kappa_2 \phi_{max}^2(M)}{\phi_{min}^2(Q)} + \frac{\zeta \eta^2 \phi_{max}^2(L_+)\kappa_3}{4}$ and $\eta = \sqrt{\frac{\kappa_1 \kappa_2 (\kappa_3 -1)v^2}{\kappa_3 (\kappa_2 -1)\phi_{min}^2(Q)\phi_{min}^2(L_+)}}$, where $\kappa_1 = 1+\frac{2\mu \phi_{max}^2(L_+)}{v^2\phi_{min}^2(L_+)}$, $\kappa_2 >1$ and $\kappa_3 > 1$ and $\|x\|_G^2$ is denoted as $\|x\|_G^2 = \left< x, Gx\right>$ for any matrix $x$ and $G$.
\end{lemma}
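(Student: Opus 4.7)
The plan is to mimic the deterministic linear-convergence analysis for decentralized ADMM of the cited reference and then carefully propagate the Gaussian perturbation $\xi^{k+1}$ (which enters through $\tilde{x}^{k+1} = x^{k+1} + \xi^{k+1}$) through every step. First I would combine equation (\ref{lem2}) with the optimality condition at the reference's $(x^*, r^*)$, namely $\nabla f(x^*)/\eta + 2Qr^* = 0$, which is exactly what Lemma~4 of \cite{7870649} supplies (using $L_- x^* = 0$ so that no $L_+$ term remains at the fixed point), to obtain the error equation
\begin{align*}
\tfrac{1}{\eta}\bigl(\nabla f(x^{k+1}) - \nabla f(x^*)\bigr) + 2Q(r^{k+1} - r^*) + L_+(\tilde{x}^{k+1} - \tilde{x}^k) = 2M^{-1}\xi^{k+1}.
\end{align*}

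Next I would expand $\|q^{k+1} - \hat{q}^*\|_G^2 = \eta\|r^{k+1} - r^*\|_2^2 + \tfrac{\eta}{2}\|x^{k+1} - x^*\|_{L_+}^2$, use $r^{k+1} - r^k = Q\tilde{x}^{k+1}$, and apply the polarization identity $2\langle a-b, a\rangle = \|a\|^2 - \|b\|^2 + \|a-b\|^2$ to convert $\|q^{k+1} - \hat{q}^*\|_G^2 - \|q^k - \hat{q}^*\|_G^2$ into inner products that can be paired against the error equation tested with $2\eta(x^{k+1} - x^*)$. The $\mu$-strong-convexity bound $\langle \nabla f(x^{k+1}) - \nabla f(x^*), x^{k+1} - x^*\rangle \geq \mu\|x^{k+1} - x^*\|_2^2$ then produces a positive term available for contraction, while cross terms such as $\langle \nabla f(x^{k+1}) - \nabla f(x^*), L_+(\tilde{x}^{k+1} - \tilde{x}^k)\rangle$ are split by Young's inequality with free parameters $\kappa_1,\kappa_2,\kappa_3$ and controlled by the smoothness estimate $\|\nabla f(x^{k+1}) - \nabla f(x^*)\|_2 \leq v\|x^{k+1} - x^*\|_2$. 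The step-size choice $\eta = \sqrt{\kappa_1\kappa_2(\kappa_3-1)v^2 / (\kappa_3(\kappa_2-1)\phi_{min}^2(Q)\phi_{min}^2(L_+))}$ stated in the lemma is exactly the value that balances these Young pieces and yields the claimed contraction $1+\zeta$ with the stated $\zeta > 0$.

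Finally, the noise $\xi^{k+1}$ appears on the right-hand side of the error equation through $2M^{-1}\xi^{k+1}$, and on its left-hand side through $\tilde{x}^{k+1}-\tilde{x}^k = (x^{k+1}-x^k) + (\xi^{k+1}-\xi^k)$. I would apply one further Young's inequality to each term that still contains $\xi^{k+1}$, separating it into a quadratic piece bounded by a multiple of $\|\xi^{k+1}\|_2^2$, which aggregates into the coefficient $P/(1+\zeta)$, and a linear residual, which collects exactly into $\langle \xi^{k+1},\ \eta L_+(\tilde{x}^{k+1} - \tilde{x}^k) + 2\eta Q(r^{k+1} - r^*) + 2\eta M(x^{k+1} - x^*)\rangle/(1+\zeta)$. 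I expect the main obstacle to be bookkeeping: the three Young parameters $\kappa_1,\kappa_2,\kappa_3$ and the step-size $\eta$ must be chosen jointly so that (i) the quadratic-in-iterate remainders consolidate precisely into the $G$-norms with coefficient $1+\zeta$, (ii) the noise-quadratic coefficient collapses to the stated closed form $P = \zeta\eta^2\kappa_2\phi_{max}^2(M)/\phi_{min}^2(Q) + \zeta\eta^2\phi_{max}^2(L_+)\kappa_3/4$, and (iii) no unwanted $\xi^{k+1}$-cross term survives outside the stated inner product. This delicate balance is forced by the spectral quantities appearing in the definitions of $\zeta$ and $P$, and once the deterministic template of \cite{7870649} is imitated with the extra $\xi^{k+1}$-terms tracked by a single additional Young's inequality per occurrence, the result follows.
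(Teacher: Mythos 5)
Your plan follows essentially the same route as the paper's own (suppressed) proof: form the error equation by subtracting the KKT condition at $(x^*,r^*)$ from (\ref{lem2}), pair it with $x^{k+1}-x^*$ so that $\mu$-strong convexity supplies the contraction term, control the cross terms via smoothness and Young-type inequalities parametrized by $\kappa_1,\kappa_2,\kappa_3$ with the stated $\eta$ balancing the coefficients into the $(1+\zeta)^{-1}$ contraction, and collect the perturbation into the quadratic $P\|\xi^{k+1}\|_2^2$ term plus the residual linear inner product. This matches the paper's argument in structure and in all essential steps (note only that the right-hand side of (\ref{lem2}) should read $2M\xi^{k+1}$ rather than $2M^{-1}\xi^{k+1}$, a typo your error equation inherits but which does not affect the strategy).
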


Following Lemma \ref{delta-coverge}, we establish the linear convergence of P-ADMM for the strongly convex problem in the following theorem. 
\begin{theorem}\label{strongconvex}
Suppose the objective function $f(x)$ is $\mu$-strongly convex and $v$-smooth. In P-ADMM, let $S_2 = \frac{4}{(1+4\theta)\phi_{max}^2(L_+)}$,$W = \frac{(1+4\theta)\phi_{max}^2(L_+)}{(1-b)(1+\zeta-4\theta)\phi_{min}^2(L_+)}$, and $S_3  = \frac{\frac{4\zeta \kappa_2 \phi_{max}^2(M)}{\phi_{min}^2(Q)} +\phi_{max}^2\bigl( \sqrt{\zeta} + \sqrt{\frac{2(\kappa_2-1)\phi_{min}^2(Q)}{\theta \kappa_1 \kappa_2}}\bigl)^2}{(1-b)(1+\zeta)(1+\zeta-4\theta)\phi_{min}^2(L_+)} + \frac{b(\kappa_4 -1)}{1-b}$. If one chooses $b$ and $\kappa_2$ satisfying $(1-b)(1+\zeta)\phi_{min}^2(L_+)> \phi_{max}^2(L_+)$ and chooses $0 < \theta < \min \{ H_1, H_2 \}$ with $H_1 = \frac{b(1+ \zeta) \phi_{min}^2(L_+)(1- 1/\kappa_4)}{4b\phi_{min}^2(L_+)(1- 1/\kappa_4) + 16\phi_{max}^2(M) }$, $H_2 = \frac{(1-b)(1+\zeta)\phi_{min}^2(L_+) - \phi_{max}^2(L_+)}{4\phi_{max}^2(L_+) + 4(1-b)\phi_{min}^2(L_+)}$,
then the following holds for 
\begin{align*}
      \mathbb{E}[\|\tilde{x}^{k+1} - x^*\|_2^2] & \leq W^{k+1}\bigl(    \|\tilde{x}^{0 } 
      - x^*\|_2^2 + S_2 \|r^{0 } - r^*\|_2^2 \nonumber   \\
      &~~~+ \frac{dS_3 \sum_{i=1}^N (\sigma^2)_i^1 W}{W-R} \bigl)
\end{align*}
with $0<R<W<1$, where the expectation is taking with respect to the noise,   $\kappa_1$ and $\zeta$ are found in Lemma~\ref{delta-coverge}, $\kappa_3 = \sqrt{\frac{v^2\phi_{min}^2(L_+) + 2\mu\phi_{max}^2(L_+)}{\theta \kappa_1 v^2\phi_{min}^2(L_+)}} +1$ and $\kappa_4 > 1$.
\end{theorem}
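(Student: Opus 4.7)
The plan is to iterate the per-step inequality of Lemma~\ref{delta-coverge} in expectation, convert from the $q$-coordinates used there to the $\tilde{x}$-coordinates of the theorem, and then sum the resulting geometric noise series.

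\textbf{Bounding the stochastic cross term.} Starting from Lemma~\ref{delta-coverge}, the main remaining task is to control the inner product $\langle \xi^{k+1}, u^{k+1}\rangle$, where $u^{k+1} = \eta L_+(\tilde{x}^{k+1} - \tilde{x}^k) + 2\eta Q(r^{k+1} - r^*) + 2\eta M(x^{k+1} - x^*)$. Split it by Young's inequality with parameter $\theta$ into a $\|\xi^{k+1}\|_2^2$ piece and a $\|u^{k+1}\|_2^2$ piece, then bound $\|u^{k+1}\|_2^2$ using the operator norms $\phi_{max}(L_+)$, $\phi_{max}(M)$, $\phi_{max}(Q)$ so that it is dominated by a linear combination of $\|q^{k+1} - \hat{q}^*\|_G^2$, $\|q^{k+1} - q^k\|_G^2$, and multiples of $\|\xi^{k+1}\|_2^2$. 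Absorb the $\|q^{k+1} - \hat{q}^*\|_G^2$ contribution into the left-hand side of Lemma~\ref{delta-coverge}; the thresholds $H_1$ and $H_2$ are precisely the largest values of $\theta$ for which the absorption leaves an effective contraction of the form $(1+\zeta-4\theta)/(1+4\theta)$ strictly positive and below one.

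\textbf{Lyapunov in $\tilde{x}$-coordinates.} The recursion so far lives in the $G$-norm on $q^k=(r^k;x^k)$, while the theorem targets $\tilde{x}^k = x^k + \xi^k$. Using the weighted Young inequality $\|\tilde{x}^k - x^*\|_2^2 \leq (1+c)\|x^k - x^*\|_2^2 + (1+1/c)\|\xi^k\|_2^2$, with $c$ tied to the parameter $b$, convert the $q$-coordinate recursion into one on the Lyapunov function $V_k = \|\tilde{x}^k - x^*\|_2^2 + S_2\|r^k - r^*\|_2^2$; the weight $S_2 = 4/[(1+4\theta)\phi_{max}^2(L_+)]$ is the natural choice that preserves the $G$-weighted balance between the primal block $\|x\|_{L_+}^2$ and the dual block $\|r\|_2^2$ after this change of variables. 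Taking total expectation and using that $x^{k+1}$ is measurable with respect to the $\sigma$-algebra $\mathcal{F}_k$ generated by the noises up to step $k$ while $\xi^{k+1}$ is an independent zero-mean Gaussian with $\mathbb{E}[\|\xi^{k+1}\|_2^2] = d\sum_{i=1}^N (\sigma^2)_i^{k+1}$ yields the one-step contraction $\mathbb{E}[V_{k+1}] \leq W\,\mathbb{E}[V_k] + S_3\,d\sum_{i=1}^N (\sigma^2)_i^{k+1}$, with $W<1$ precisely under the hypothesis $(1-b)(1+\zeta)\phi_{min}^2(L_+) > \phi_{max}^2(L_+)$ and the choice $\theta < \min\{H_1, H_2\}$.

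\textbf{Unrolling and summing the geometric noise.} Iterating the one-step inequality from $k=0$ gives $\mathbb{E}[V_{k+1}] \leq W^{k+1}V_0 + S_3\,d\sum_{s=0}^{k}W^{k-s}\sum_{i=1}^N (\sigma^2)_i^{s+1}$. Substituting the geometric decay $(\sigma^2)_i^{s+1} = R^s (\sigma^2)_i^1$ and using $R<W<1$ turns the inner sum into a convergent geometric series bounded by a constant multiple of $W^{k+1}\sum_i (\sigma^2)_i^1 \cdot W/(W-R)$. Dropping the nonnegative $S_2\|r^{k+1}-r^*\|_2^2$ term on the left and using $V_0 = \|\tilde{x}^0 - x^*\|_2^2 + S_2\|r^0 - r^*\|_2^2$ recovers the stated bound on $\mathbb{E}[\|\tilde{x}^{k+1} - x^*\|_2^2]$. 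The hard part will be the coupled algebraic tuning of the Young parameters $\theta,\kappa_3,\kappa_4$ together with the Lyapunov weight $S_2$ and coordinate-change parameter $b$: all must be chosen simultaneously so that the cross-term absorption is feasible, the contraction factor $W$ is strictly less than one, the range condition $R<W$ can be arranged via the initial variance schedule, and the coefficient $S_3$ remains finite; propagating all spectral ratios $\phi_{max}(\cdot)/\phi_{min}(\cdot)$ through these coupled constraints to land on the precise closed forms for $W$, $S_2$, and $S_3$ is the main bookkeeping challenge.
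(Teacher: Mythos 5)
Your proposal follows essentially the same route as the paper: the paper's Lemma~\ref{lemma6} (in the appendix) carries out exactly the cross-term absorption you describe — Young's inequality with parameter $\theta$ on $\langle \xi^{k+1}, u^{k+1}\rangle$, the $\|a+b\|_2^2 \le 2\|a\|_2^2+2\|b\|_2^2$ expansion bounded by the spectral norms, and the $\kappa_4$/$b$-weighted inequality relating $\tilde{x}^{k+1}$ to $x^{k+1}$ — yielding the one-step contraction on $\|\tilde{x}^{k}-x^*\|_2^2 + S_2\|r^k-r^*\|_2^2$ with factor $W$, after which the paper likewise unrolls, drops the nonnegative $r$-term, takes expectations using $\mathbb{E}[\xi^{k+1}]=0$, and sums the geometric series under $R<W<1$. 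The approach and all key ingredients match; no substantive difference.
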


\begin{proof}
The result in Lemma~\ref{lemma6} can be rewritten as 
\begin{align}
   & \|\tilde{x}^{k+1} - x^*\|_2^2 + S_1 \|r^{k+1} - r^*\|_2^2\nonumber \\
    &\leq W(    \|\tilde{x}^{k } - x^*\|_2^2 + S_2 \|r^{k } - r^*\|_2^2 ) + S_3\|\xi^{k+1}\|_2^2 ,\nonumber
\end{align}
where $S_1 = \frac{4}{(1-b)\phi_{min}^2(L_+)}$, $S_2 = \frac{4}{(1+4\theta)\phi_{max}^2(L_+)}$,$W = \frac{(1+4\theta)\phi_{max}^2(L_+)}{(1-b)(1+\zeta-4\theta)\phi_{min}^2(L_+)}$, and 
\begin{align*}
    S_3 &= \frac{\frac{4\zeta \kappa_2 \phi_{max}^2(M)}{\phi_{min}^2(Q)} +\phi_{max}^2\bigl( \sqrt{\zeta} + \sqrt{\frac{2(\kappa_2-1)\phi_{min}^2(Q)}{\theta \kappa_1 \kappa_2}}\bigl)^2}{(1-b)(1+\zeta)(1+\zeta-4\theta)\phi_{min}^2(L_+)} \\
    &~~~+ \frac{b(\kappa_4 -1)}{1-b}
\end{align*} 
by setting $\kappa_3 = \sqrt{\frac{v^2\phi_{min}^2(L_+) + 2\mu\phi_{max}^2(L_+)}{\theta \kappa_1 v^2\phi_{min}^2(L_+)}} +1$.
Furthermore, by setting $(1-b)\phi_{min}^2(L_+) \leq (1+\theta)\phi_{max}^2(L_+)$, we have 
$S_1 \geq S_2$.
Therefore, we obtain 
\begin{align*}
     & \|\tilde{x}^{k+1} - x^*\|_2^2 + S_1 \|r^{k+1} - r^*\|_2^2 \leq W^{k+1}\bigl(    \|\tilde{x}^{0 } 
      - x^*\|_2^2 \\
      &~~~+ S_2 \|r^{0 } - r^*\|_2^2  + \sum_{s  =1}^{k+1}W^{-s}S_3\|\xi^{k+1}\|_2^2\bigl).
\end{align*}
Since $ S_1 \|r^{k+1} - r^*\|_2^2>0$, we have
\begin{align}\label{st_cover1}
      \|\tilde{x}^{k+1} - x^*\|_2^2 & \leq W^{k+1}\bigl(    \|\tilde{x}^{0 } 
      - x^*\|_2^2 + S_2 \|r^{0 } - r^*\|_2^2 \nonumber   \\
      &~~~+ \sum_{s  =1}^{k+1}W^{-s}S_3\|\xi^{k+1}\|_2^2\bigl).
\end{align}
Next, we constrain $0< W <1$ by choosing $\theta$ from
\begin{align*} 
   0< \theta <\frac{(1-b)(1+\zeta)\phi_{min}^2(L_+) - \phi_{max}^2(L_+)}{4\phi_{max}^2(L_+) + 4(1-b)\phi_{min}^2(L_+)}.
\end{align*}
Together with the constraint of $\theta$ in Lemma~\ref{lemma6}, $\theta$ should be chosen as $0 < \theta < \min \{H_1, H_2 \}$, where 
\begin{align*}
    H_1 = \frac{b(1+ \zeta) \phi_{min}^2(L_+)(1- 1/\kappa_4)}{4b\phi_{min}^2(L_+)(1- 1/\kappa_4) + 16\phi_{max}^2(M) }
\end{align*}
and 
\begin{align*}
    H_2 = \frac{(1-b)(1+\zeta)\phi_{min}^2(L_+) - \phi_{max}^2(L_+)}{4\phi_{max}^2(L_+) + 4(1-b)\phi_{min}^2(L_+)}.
\end{align*}
Taking expectation of (\ref{st_cover1}) and choosing variance decrease rate $R$ less than the linear convergence rate $W$, we obtain
\begin{align*} 
      \mathbb{E}[\|\tilde{x}^{k+1} - x^*\|_2^2] & \leq W^{k+1}\bigl(    \|\tilde{x}^{0 } 
      - x^*\|_2^2 + S_2 \|r^{0 } - r^*\|_2^2 \nonumber   \\
      &~~~+ \sum_{s  =1}^{k+1}W^{-s}S_3 \mathbb{E}\|\xi^{k+1}\|_2^2\bigl) \nonumber\\
      & \leq W^{k+1}\bigl(    \|\tilde{x}^{0 } 
      - x^*\|_2^2 + S_2 \|r^{0 } - r^*\|_2^2 \nonumber   \\
      &~~~+ \frac{dS_3 \sum_{i=1}^N (\sigma^2)_i^1 W}{W-R} \bigl).
\end{align*}
In the first inequality, we use $\mathbb{E}[\xi^{k+1}] = 0$. In the second inequality, we use the sum of infinity terms of the geometric sequence.
\end{proof}
\begin{remark}
Under a strong convexity assumption, P-ADMM exhibits the same linear convergence result of non-private decentralized ADMM in \cite{shiwadmm}. As we can see from above theorem, if the variance of Gaussian noise decays much faster than the distance $\|\tilde{x}^{k+1} - x^*\|_2^2$, then P-ADMM will converge at a linear rate to the minimizer.
\end{remark}
\subsection{Convex Case}
Some objectives like Lasso and group Lasso are non-strongly convex, so, in this case, we consider the objective function $f(x)$ is general convex. In other words, by the definition of convex, we have 
\begin{align*}
    f(x^{k+1}) -f(x^*) \leq \left< x^{k+1}-x^*, \nabla f(x^{k+1})\right>,
\end{align*}
where $x^*$ is the optimal solution of (\ref{eq:nc-mat}).

\begin{lemma} \label{pro1}
For any $r \in \mathbb{R}^{Nd}$ and $k>0$, we have
\begin{align*}
   & \frac{f(x^{k+1}) - f(x^*)}{\eta} + \left<2r,Qx^{k+1} \right>  \\ 
& \leq \frac{1}{\eta} \left(  \|q^k -q^*\|_G^2 - \| q^{k+1} -q^*\|_G^2  \right) + \frac{\phi_{max}^2(L_+)}{2\phi_{min}(L_-)} \|\xi^k  \|_2^2 \\
    &~~~+\left<\xi^{k+1}, 2Q(r^{k+1} - r) \right>,
\end{align*}
where $q^* = \begin{pmatrix}
    r \\ x^*
    \end{pmatrix}$, and $\|x\|_G^2$ is denoted as $\|x\|_G^2 = \left< x, Gx\right>$ for any matrix $x$ and $G$. 
\end{lemma}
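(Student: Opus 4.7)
The plan is to combine the convexity bound $f(x^{k+1}) - f(x^*) \le \langle \nabla f(x^{k+1}), x^{k+1} - x^*\rangle$ with the reformulated update (\ref{lem2}), which (after clearing the $\eta$) expresses $\nabla f(x^{k+1})$ in terms of $Qr^{k+1}$, $L_+(\tilde{x}^{k+1} - \tilde{x}^k)$, and the Gaussian drift $M\xi^{k+1}$. Adding $\langle 2r, Qx^{k+1}\rangle$ to both sides and using the consensus property $Qx^* = 0$ (which follows from $\mathrm{Null}(Q) = \mathrm{span}\{\mathbf{1}\otimes I_d\}$ together with the fact that $x^*$ is a consensus vector) collapses the $r$-dependence into a single term $-\langle 2Q(r^{k+1} - r), x^{k+1}\rangle$, plus the $L_+$ and noise pieces.

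For the $r$-component I would exploit the kinematic identity $Qx^{k+1} = Q\tilde{x}^{k+1} - Q\xi^{k+1} = (r^{k+1} - r^k) - Q\xi^{k+1}$ and the three-point identity $-2\langle a - b, a - c\rangle = \|b - c\|_2^2 - \|a - c\|_2^2 - \|a - b\|_2^2$. This yields the telescoping $\|r^k - r\|_2^2 - \|r^{k+1} - r\|_2^2$, a residual $-\|r^{k+1} - r^k\|_2^2$, and the desired noise pairing $\langle 2Q(r^{k+1} - r), \xi^{k+1}\rangle$. For the $L_+$ component I would split $\tilde{x}^{k+1} - \tilde{x}^k = (x^{k+1} - x^k) + (\xi^{k+1} - \xi^k)$ to align with the $x$-variables appearing in $q^k$, and the three-point identity in the $L_+$ inner product applied to $\langle L_+(x^{k+1} - x^k), x^{k+1} - x^*\rangle$ produces $\|x^k - x^*\|_{L_+}^2 - \|x^{k+1} - x^*\|_{L_+}^2$. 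Together with the $r$-block this assembles into $\frac{1}{\eta}(\|q^k - q^*\|_G^2 - \|q^{k+1} - q^*\|_G^2)$.

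It then remains to handle the accumulated noise cross terms. Using $2M - L_+ = L_-$ and $L_- x^* = 0$, the coefficients of $\xi^{k+1}$ reorganize into $\langle L_-\xi^{k+1}, x^{k+1}\rangle = 2\langle Q\xi^{k+1}, r^{k+1} - r^k\rangle - 2\|Q\xi^{k+1}\|_2^2$; Young's inequality $2\langle Q\xi^{k+1}, r^{k+1} - r^k\rangle \le \|r^{k+1} - r^k\|_2^2 + \|Q\xi^{k+1}\|_2^2$ then completely absorbs the leftover $-\|r^{k+1} - r^k\|_2^2$, leaving only the clean pairing $\langle 2Q(r^{k+1} - r), \xi^{k+1}\rangle$ in the $\xi^{k+1}$ variables.

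The main obstacle is the lone surviving cross term $\langle L_+\xi^k, x^{k+1} - x^*\rangle$ produced by the $\xi^k$-half of the $L_+$ split. I would bound it by Young's inequality with weight $\phi_{min}(L_-)$, namely $\langle L_+\xi^k, x^{k+1} - x^*\rangle \le \frac{1}{2\phi_{min}(L_-)}\|L_+\xi^k\|_2^2 + \frac{\phi_{min}(L_-)}{2}\|x^{k+1} - x^*\|_2^2$, and then use $\|L_+\xi^k\|_2^2 \le \phi_{max}^2(L_+)\|\xi^k\|_2^2$, which reproduces the target coefficient $\frac{\phi_{max}^2(L_+)}{2\phi_{min}(L_-)}$. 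The overhead $\frac{\phi_{min}(L_-)}{2}\|x^{k+1} - x^*\|_2^2$ is expected to cancel against a negative $-\frac{1}{2}\|x^{k+1} - x^*\|_{L_-}^2$ extracted from the alternative form $\nabla f(x^{k+1}) = -2\eta Mx^{k+1} + \eta L_+\tilde{x}^k - 2\eta Qr^k$ derived from (\ref{le1}); this last absorption, which implicitly relies on the consensus-preserving structure of the iterates, is the delicate step of the argument.
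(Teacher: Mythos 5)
Your overall route is the same as the paper's: convexity plus the reformulated update (\ref{lem2}), the consensus property $Qx^*=0$ to fold in $\left<2r,Qx^{k+1}\right>$, three-point identities in the $r$- and $L_+$-blocks to produce the $G$-norm telescoping, and a weighted Young inequality on the surviving cross term $\left<L_+\xi^k, x^{k+1}-x^*\right>$ with the coefficient $\frac{\phi_{max}^2(L_+)}{2\phi_{min}(L_-)}$. However, there is a genuine gap in your plan for absorbing the Young overhead $\frac{\phi_{min}(L_-)}{2}\|x^{k+1}-x^*\|_2^2$. The negative term you need, $-\tfrac{1}{2}\|x^{k+1}-x^*\|_{L_-}^2=-\|Qx^{k+1}\|_2^2$, is exactly the quantity your earlier step throws away. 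Indeed, combining the three-point residual $-\|r^{k+1}-r^k\|_2^2$ with $\left<L_-\xi^{k+1},x^{k+1}\right> = 2\left<Q\xi^{k+1},r^{k+1}-r^k\right>-2\|Q\xi^{k+1}\|_2^2$ gives, by exact completion of the square, $-\|(r^{k+1}-r^k)-Q\xi^{k+1}\|_2^2-\|Q\xi^{k+1}\|_2^2=-\|Qx^{k+1}\|_2^2-\|Q\xi^{k+1}\|_2^2$; your Young bound $2\left<Q\xi^{k+1},r^{k+1}-r^k\right>\le \|r^{k+1}-r^k\|_2^2+\|Q\xi^{k+1}\|_2^2$ is precisely equivalent to discarding the $-\|Qx^{k+1}\|_2^2$ piece as being $\le 0$. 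Having spent it there, the ``alternative form'' of the update cannot supply a second copy of $-\tfrac12\|x^{k+1}-x^*\|_{L_-}^2$: it is the same equation as (\ref{lem2}) rearranged, so invoking it would only re-derive the term you already consumed. As written, your argument terminates with an uncancelled $+\frac{\phi_{min}(L_-)}{2}\|x^{k+1}-x^*\|_2^2$ on the right-hand side, so the stated bound does not follow.

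The repair is small: do not apply Young at that point. Keep the exact identity $-\|r^{k+1}-r^k\|_2^2+\left<L_-\xi^{k+1},x^{k+1}\right>=-\|Qx^{k+1}\|_2^2-\|Q\xi^{k+1}\|_2^2$, bound $-\|Qx^{k+1}\|_2^2=-\|Q(x^{k+1}-x^*)\|_2^2\le -\frac{\phi_{min}(L_-)}{2}\|x^{k+1}-x^*\|_2^2$ (as the paper does; note this tacitly assumes $x^{k+1}-x^*$ has no component in $Null(Q)$), use it to cancel the Young overhead from the $\xi^k$ cross term, and drop $-\|Q\xi^{k+1}\|_2^2\le 0$. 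With that single change your decomposition coincides with the paper's proof.
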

\begin{proof}
Using convexity and (\ref{lem2}), $\forall r \in \mathbb{R}^{Nd}$, we have
\begin{align}\label{prpos_1}
    &\frac{f(x^{k+1}) - f(x^*)}{\eta} + \left<2r,Qx^{k+1}\right> \nonumber\\
    & \leq \frac{1}{\eta}\left< x^{k+1}-x^*, \nabla f(x^{k+1})\right> + \left<2r,Qx^{k+1}\right> \nonumber\\
    & \leq \bigl< x^{k+1}-x^*, 2M^{-1} \xi^{k+1} -2Q(r^{k+1}-r) \nonumber\\
    &~~~- L_+(\tilde{x}^{k+1} -\tilde{x}^{k } )  \bigl> \nonumber\\
    & = \left< x^{k+1} - x^*, L_+(x^k - x^{k+1})\right> +\bigl< x^{k+1} - x^*, \nonumber\\ &~~~~L_+(\tilde{x}^k - x^{k})\bigl>  
     +\left< x^{k+1} - x^*, 2Q(r -r^{k+1} )\right> \nonumber\\
    &~~~+ \left< x^{k+1} - x^*, L_-(\tilde{x}^{k+1} - x^{k+1} )\right> \nonumber\\
    & = \left< x^{k+1} - x^*, L_+(x^k - x^{k+1})\right> +\bigl< x^{k+1} - x^*, \nonumber\\ &~~~~L_+(\tilde{x}^k - x^{k})\bigl> +\left< \tilde{x}^{k+1} - x^*, 2Q(r -r^{k+1} )\right>\nonumber \\
    &~~~+ \left< x^{k+1} - x^*, L_-(\tilde{x}^{k+1} - x^{k+1} )\right>  \nonumber\\
    &~~~+\left<\xi^{k+1},2Q(r -r^{k+1} \right>.
\end{align}

Recall $Null(Q) = span\{1\}$ and $x^*$ is the optimal consensus solution, so we have $\left< x^*, Q\right> = 0$. By the definition of sequence $r^k$, we have $\tilde{x}^{k+1} = r^{k+1} - r^k$. Thus, (\ref{prpos_1}) is simplified as
\begin{align*}
    &\frac{f(x^{k+1}) - f(x^*)}{\eta} + \left<2r,Qx^{k+1} \right>  \\
    & \leq \frac{1}{\eta} \left(  \| q^k -q^*\|_G^2 - \| q^{k+1} -q^*\|_G^2  - \| q^{k+1} -q^k\|_G^2 \right) \\
    &~~~+  \left< x^{k+1} - x^* , L_+(\tilde{x}^{k} - x^k)  \right> +\left<\xi^{k+1}, 2Q(r^{k+1} - r) \right> \\
    &~~~+ \left< x^{k+1} - x^* , L_-(\tilde{x}^{k+1} - x^{k+1})  \right> \\
    & = \frac{1}{\eta} \left(  \| q^k -q^*\|_G^2 - \| q^{k+1} -q^*\|_G^2  \right)- \|Qx^{k+1}\|_2^2 \\ &~~~-\|Q \xi^{k+1}\|_2^2+2 \left< \frac{L_+}{2}(x^{k+1} -x^* ), \tilde{x}^k - x^k\right>  \\
    &~~~ +\left<\xi^{k+1}, 2Q(r^{k+1} - r) \right> \\
    & \leq \frac{1}{\eta} \left(  \| q^k -q^*\|_G^2 - \| q^{k+1} -q^*\|_G^2  \right) \\
    &~~~ - \frac{\phi_{min} (L_-)}{2}\| x^{k+1} - x^{*}\|_2^2 -\|Q \xi^{k+1}\|_2^2  \\
    &~~~+  \frac{2\phi_{min}(L_-)}{\phi_{max}^2(L_+)}\|\frac{L_+}{2}(x^{k+1} -x^* )\|_2^2 \\
    &~~~+ \frac{\phi_{max}^2(L_+)}{2\phi_{min}(L_-)} \|\tilde{x}^k - x^k \|_2^2 +\left<\xi^{k+1}, 2Q(r^{k+1} - r) \right> \\
    & \leq \frac{1}{\eta} \left(  \| q^k -q^*\|_G^2 - \| q^{k+1} -q^*\|_G^2  \right) + \frac{\phi_{max}^2(L_+)}{2\phi_{min}(L_-)} \|\xi^k  \|_2^2 \\
    &~~~+\left<\xi^{k+1}, 2Q(r^{k+1} - r) \right>.
\end{align*}
In the second inequality, we use $\frac{1}{t}\hat{a}^2 + t\hat{b}^2 \geq 2\hat{a} \hat{b}$ for $t>0$.
\end{proof}

Based on Lemma \ref{pro1}, we give the following theorem to establish the convergence rate of P-ADMM for the general convex problem.
\begin{theorem}\label{generalconvex}
Suppose the objective function $f(x)$ is general convex. In P-ADMM, for any $K>0$, we have
\begin{align*}
    &\mathbb{E}[f(\hat{x}^K) - f(x^*)] \leq \frac{\eta\|Qx^0\|_2^2}{K} \nonumber\\
    &+ \frac{\eta \|x^0 - x^*\|_{\frac{L_+ }{2}}^2}{K} + \frac{1}{K}\frac{\eta d \phi_{max}^2(L_+) \sum_{i=1}^N (\sigma^2)_i^1}{2\phi_{min}(L_-)(1-R)}
\end{align*}
with $0<R<1$, where the expectation is taking with respect to the noise and $\hat{x}^K  = \frac{1}{K} \sum_{k=0}^{K-1} x^{k+1} $.
\end{theorem}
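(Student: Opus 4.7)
The plan is to take Lemma~\ref{pro1} as the per-iteration building block, specialize it to a convenient choice of $r$, then sum, telescope, and invoke convexity (via Jensen's inequality) on the average iterate $\hat{x}^K$. Concretely, I would set $r = 0$ in Lemma~\ref{pro1}; this kills the $\langle 2r, Qx^{k+1}\rangle$ term on the left and makes $q^* = (0;\, x^*)$ so that the telescoping quantity is $\|q^k-q^*\|_G^2 = \eta\|r^k\|_2^2 + \eta\|x^k-x^*\|_{L_+/2}^2$. Since the algorithm does not inject noise before the first broadcast, $\xi^0 = 0$, so $\tilde{x}^0 = x^0$ and $r^0 = Q\tilde{x}^0 = Qx^0$, which matches the $\eta\|Qx^0\|_2^2$ term in the claim.

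Summing the inequality of Lemma~\ref{pro1} from $k=0$ to $K-1$ telescopes the $\|q^k-q^*\|_G^2$ differences into $\|q^0-q^*\|_G^2 - \|q^K-q^*\|_G^2$, and I would drop the latter (nonnegative) term. Dividing by $K$, multiplying by $\eta$, and applying convexity of $f$ to the averaged iterate $\hat{x}^K = \frac{1}{K}\sum_{k=0}^{K-1} x^{k+1}$ yields
\begin{align*}
 f(\hat{x}^K) - f(x^*) \;\leq\;& \frac{\eta\|Qx^0\|_2^2 + \eta\|x^0 - x^*\|_{L_+/2}^2}{K} \\
 &+ \frac{\eta}{K}\sum_{k=0}^{K-1}\left[\frac{\phi_{\max}^2(L_+)}{2\phi_{\min}(L_-)}\|\xi^k\|_2^2 + \langle \xi^{k+1},\, 2Qr^{k+1}\rangle\right].
\end{align*}
Taking expectation, the quadratic noise terms evaluate to $\mathbb{E}\|\xi^k\|_2^2 = d\sum_{i=1}^N(\sigma^2)_i^k$, and the geometric decay $(\sigma^2)_i^{k+1} = R^{k}(\sigma^2)_i^1$ turns $\sum_{k=0}^{K-1}(\sigma^2)_i^{k+1}$ into a geometric series bounded by $(\sigma^2)_i^1/(1-R)$, which gives exactly the last term of the target bound.

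The main obstacle is the stochastic cross term $\mathbb{E}\langle \xi^{k+1}, 2Qr^{k+1}\rangle$, because $r^{k+1}$ is not independent of $\xi^{k+1}$. I would handle it by decomposing $r^{k+1} = r^k + Qx^{k+1} + Q\xi^{k+1}$; since $\xi^{k+1}$ is drawn independently of the sigma-algebra generated by $\xi^0,\ldots,\xi^k$ (and $r^k$, $x^{k+1}$ are measurable with respect to that sigma-algebra), the first two pieces contribute zero in expectation, while the remaining $\mathbb{E}\langle\xi^{k+1},\, 2Q^2\xi^{k+1}\rangle$ is a nonnegative quantity of order $\mathrm{tr}(Q^2\,\Sigma^{k+1})$ that can be absorbed into the already-present noise-variance sum after possibly slightly loosening the constant (and, if needed, using the negative $-\|Q\xi^{k+1}\|_2^2$ slack that was discarded in the proof of Lemma~\ref{pro1}). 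Once this cross term is controlled, collecting everything and using $\|x^0-x^*\|_{L_+/2}^2$ in place of $\eta\|x^0-x^*\|_{L_+/2}^2$ after factoring $\eta$ out of the leading numerator gives the stated $\mathcal{O}(1/K)$ rate.
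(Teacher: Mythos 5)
Your proposal follows essentially the same route as the paper's proof: specialize Lemma~\ref{pro1} to $r=0$, telescope the $\|q^k-q^*\|_G^2$ terms, drop the nonpositive remainder, apply Jensen's inequality to $\hat{x}^K$, and sum the geometric series of noise variances. The one place you genuinely add value is the cross term $\mathbb{E}\langle \xi^{k+1}, 2Qr^{k+1}\rangle$: the paper dismisses it by invoking $\mathbb{E}[\xi^{k+1}]=0$, but as you correctly observe, $r^{k+1}=\sum_{s=0}^{k+1}Q\tilde{x}^{s}$ contains $Q\tilde{x}^{k+1}=Q(x^{k+1}+\xi^{k+1})$, so $r^{k+1}$ is \emph{not} independent of $\xi^{k+1}$ and the expectation does not vanish — it leaves a residual $\mathbb{E}\langle\xi^{k+1},2Q^2\xi^{k+1}\rangle=\mathrm{tr}(L_-\Sigma^{k+1})\geq 0$. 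Your fix (conditioning on the filtration generated by past noise to kill the $r^k+Qx^{k+1}$ pieces, then absorbing the residual trace term into the noise-variance sum, partially offset by the discarded $-\|Q\xi^{k+1}\|_2^2$ slack in Lemma~\ref{pro1}) is the right repair and preserves the $\mathcal{O}(1/K)$ rate, though it does alter the explicit constant in the last term relative to the theorem as stated. So your argument is correct and, on this point, more careful than the paper's own.
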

\begin{proof}

\begin{figure*}[t]
	\centering   
	\includegraphics[width=0.265\textwidth]{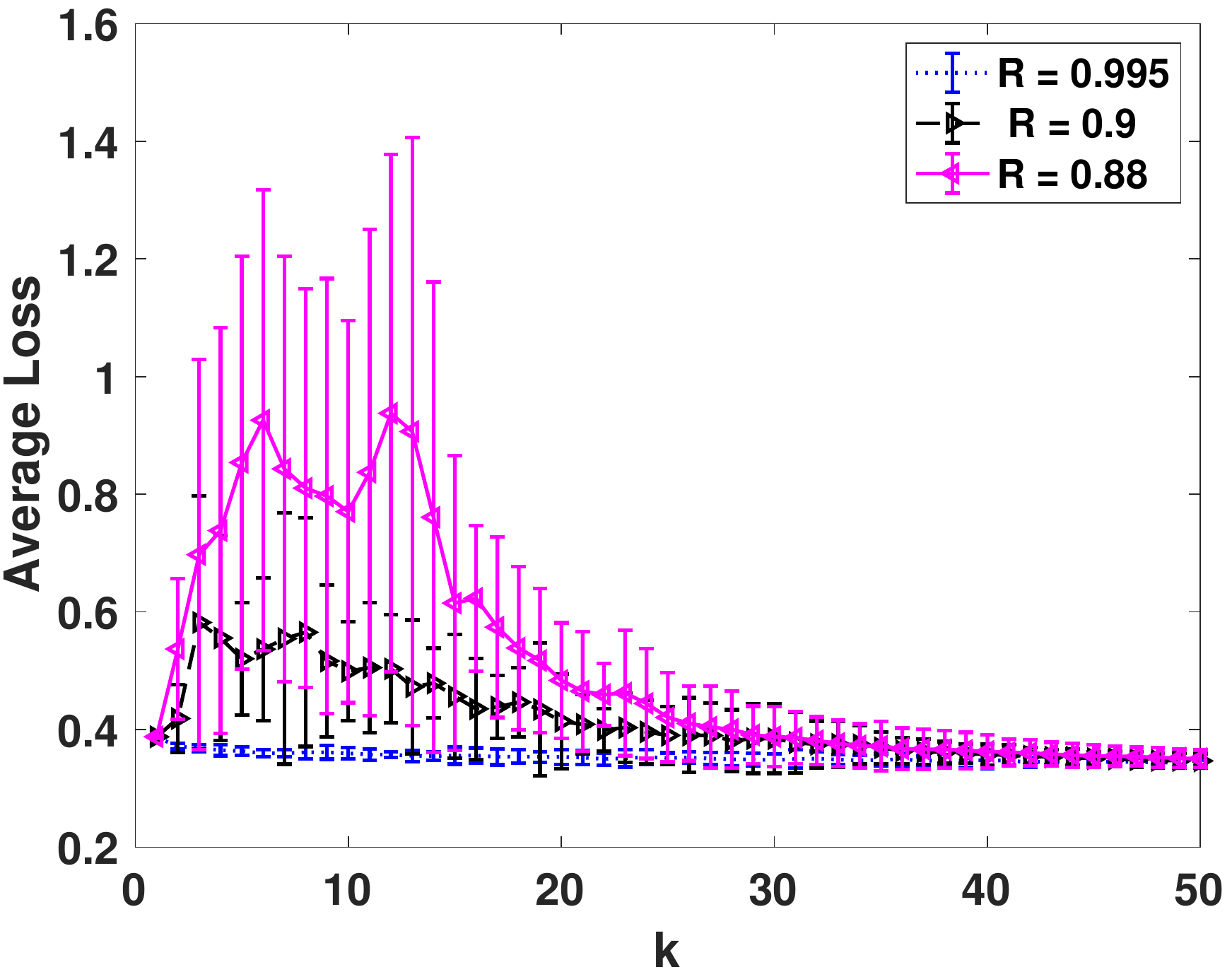}
	\includegraphics[width=0.265\textwidth]{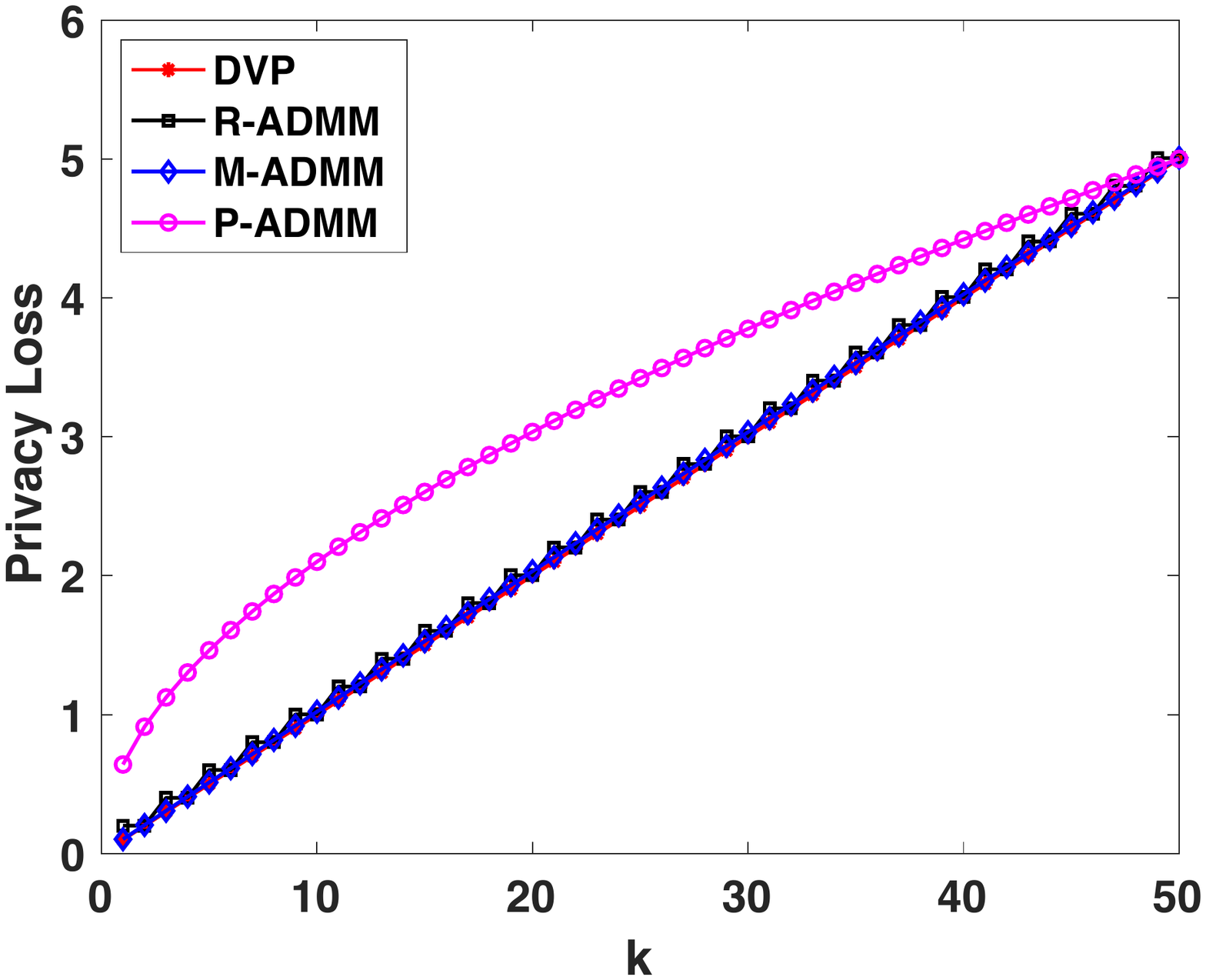}
	\includegraphics[width=0.45\textwidth]{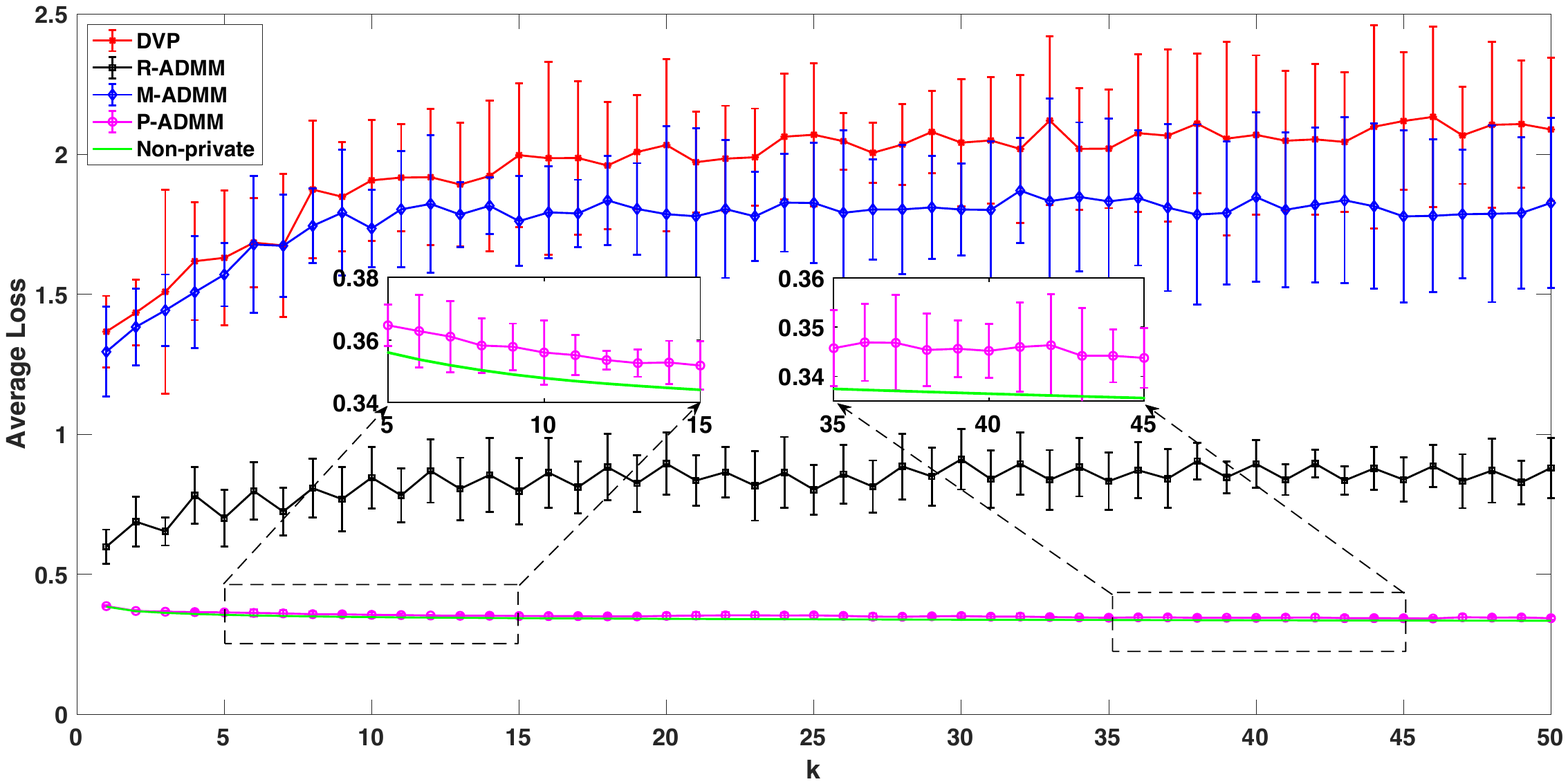}

	\caption{Compare convergence and privacy for $L_2$-regularized logistic regression: Total privacy loss $\epsilon = 5$.}
	\label{fig_rg_5}
\end{figure*}

\begin{figure*}[t]
	\centering   
	\includegraphics[width=0.265\textwidth]{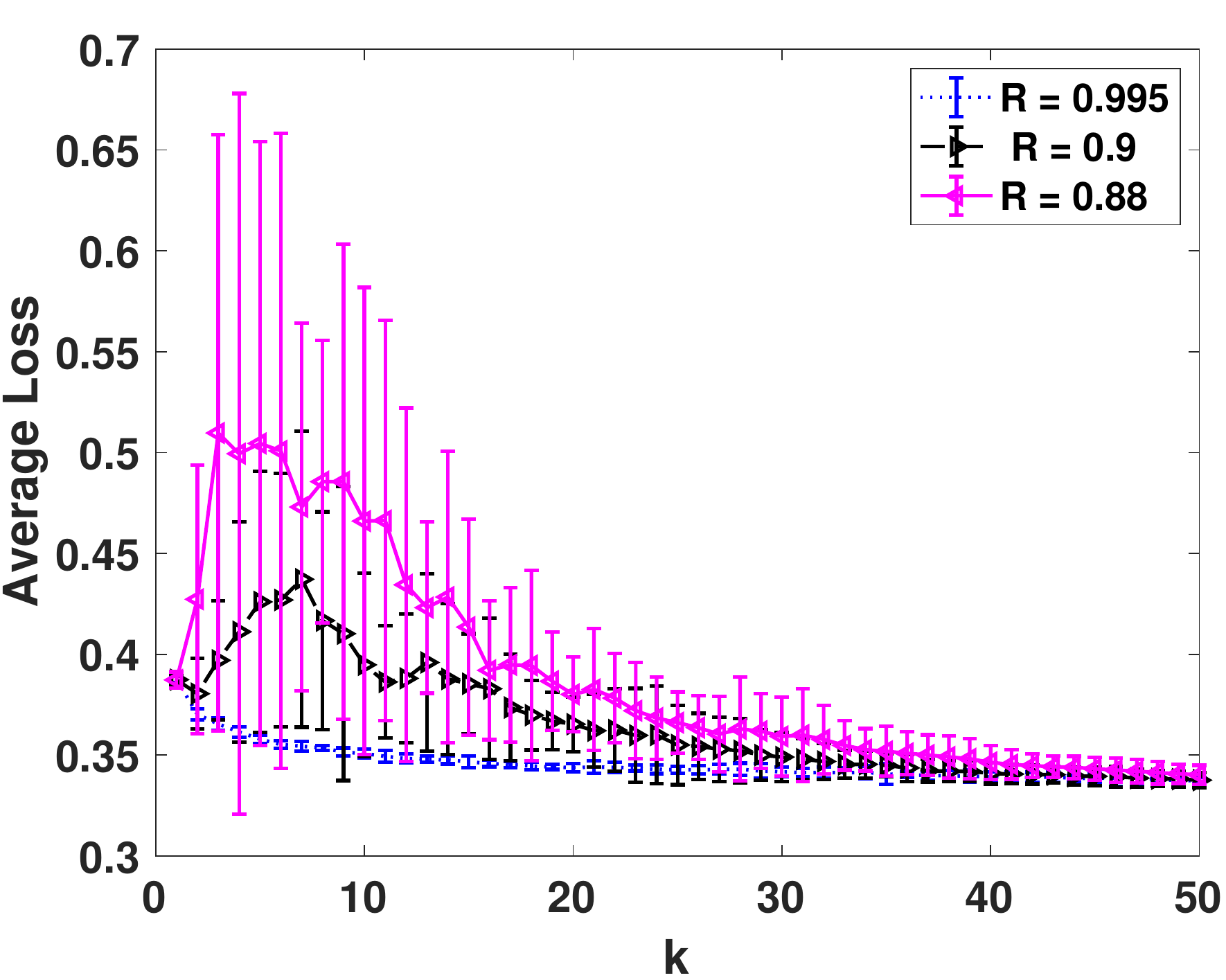}
	\includegraphics[width=0.265\textwidth]{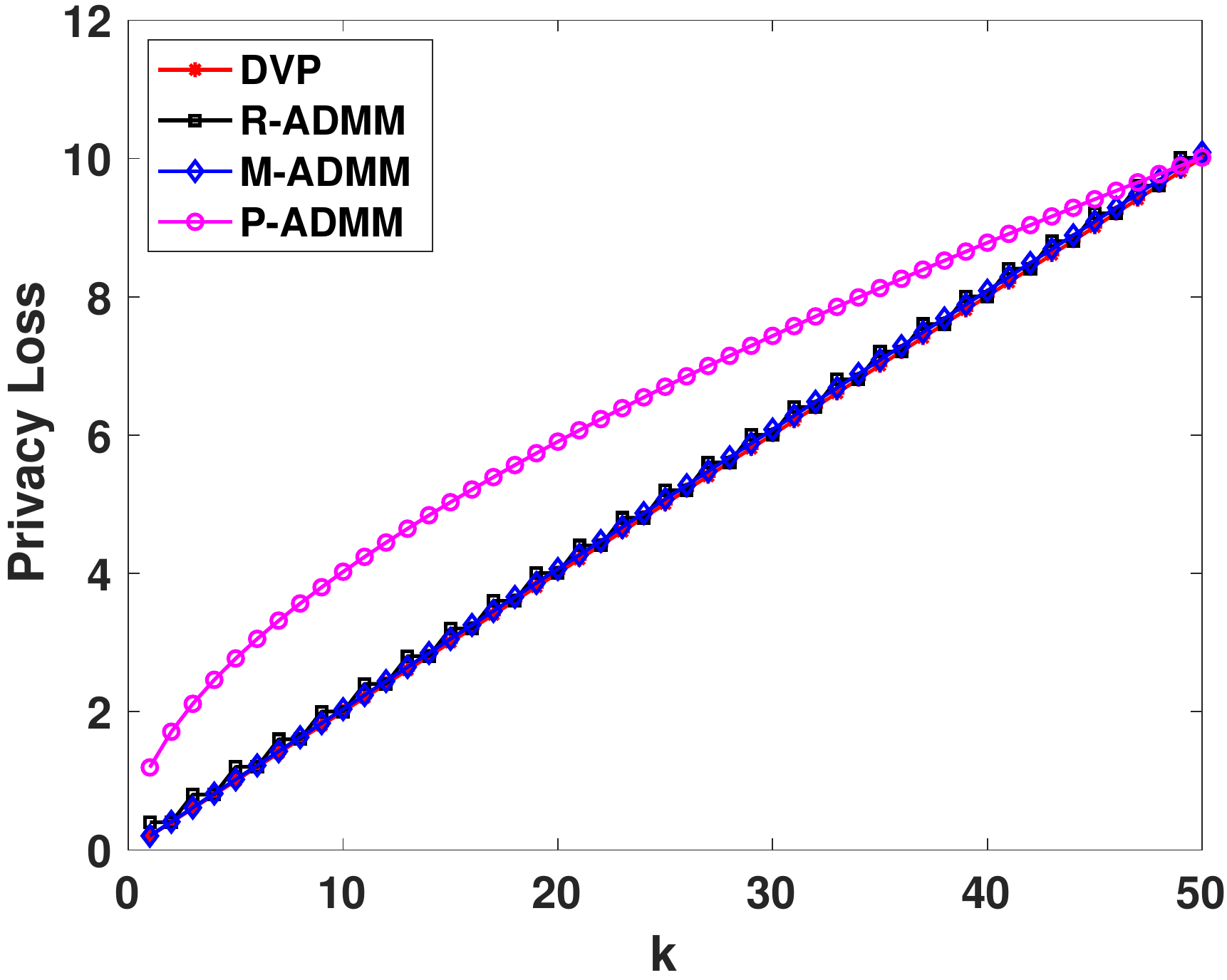}
	\includegraphics[width=0.45\textwidth]{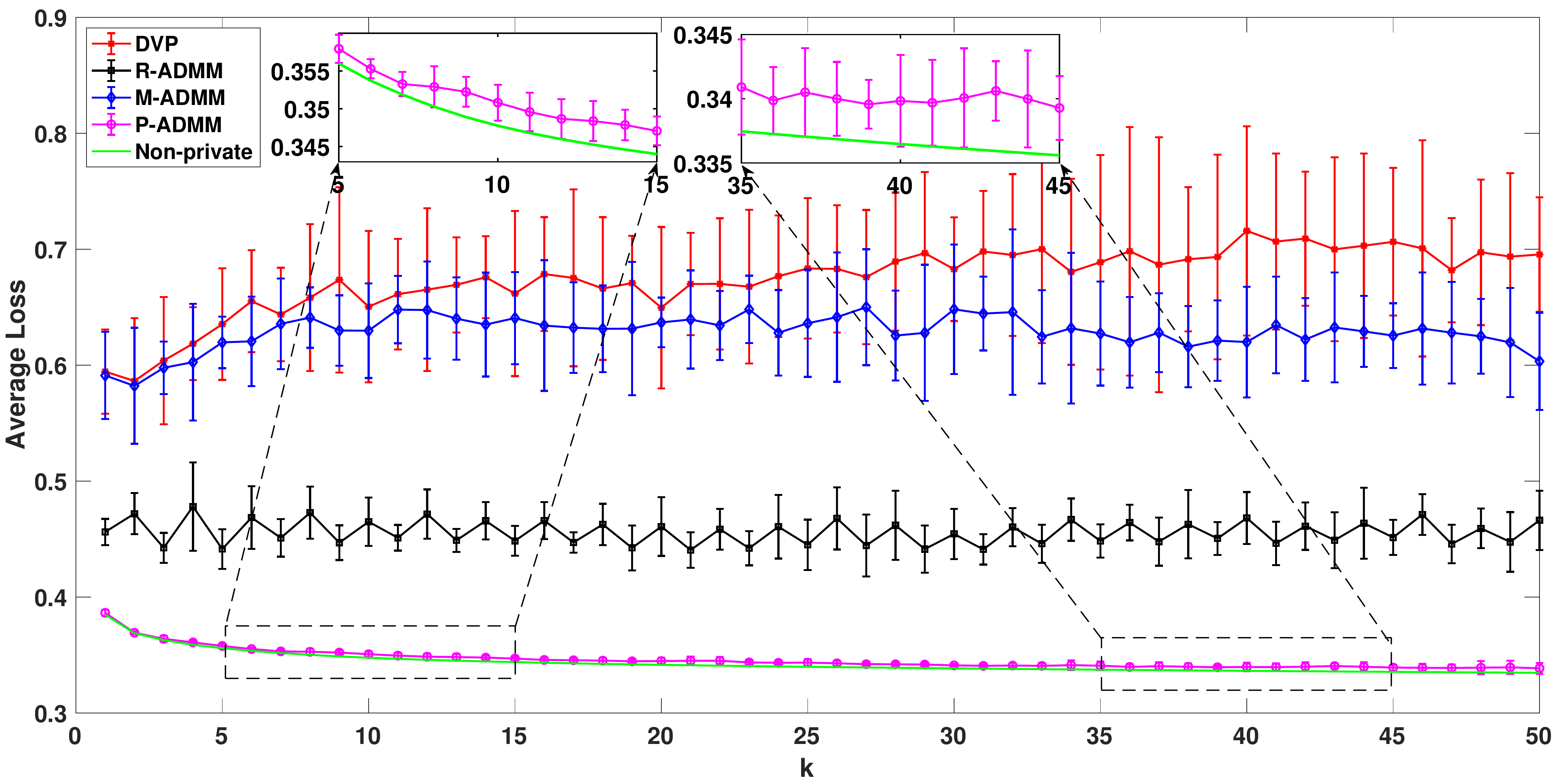}
	\caption{Compare convergence and privacy for $L_2$-regularized logistic regression: Total privacy loss $\epsilon = 10$.}
	\label{fig_rg_10}
\end{figure*}
\begin{figure*}[t]
	\centering   
	\subfigure[Total privacy loss $\epsilon = 5$ .]{\includegraphics[width=0.49\textwidth]{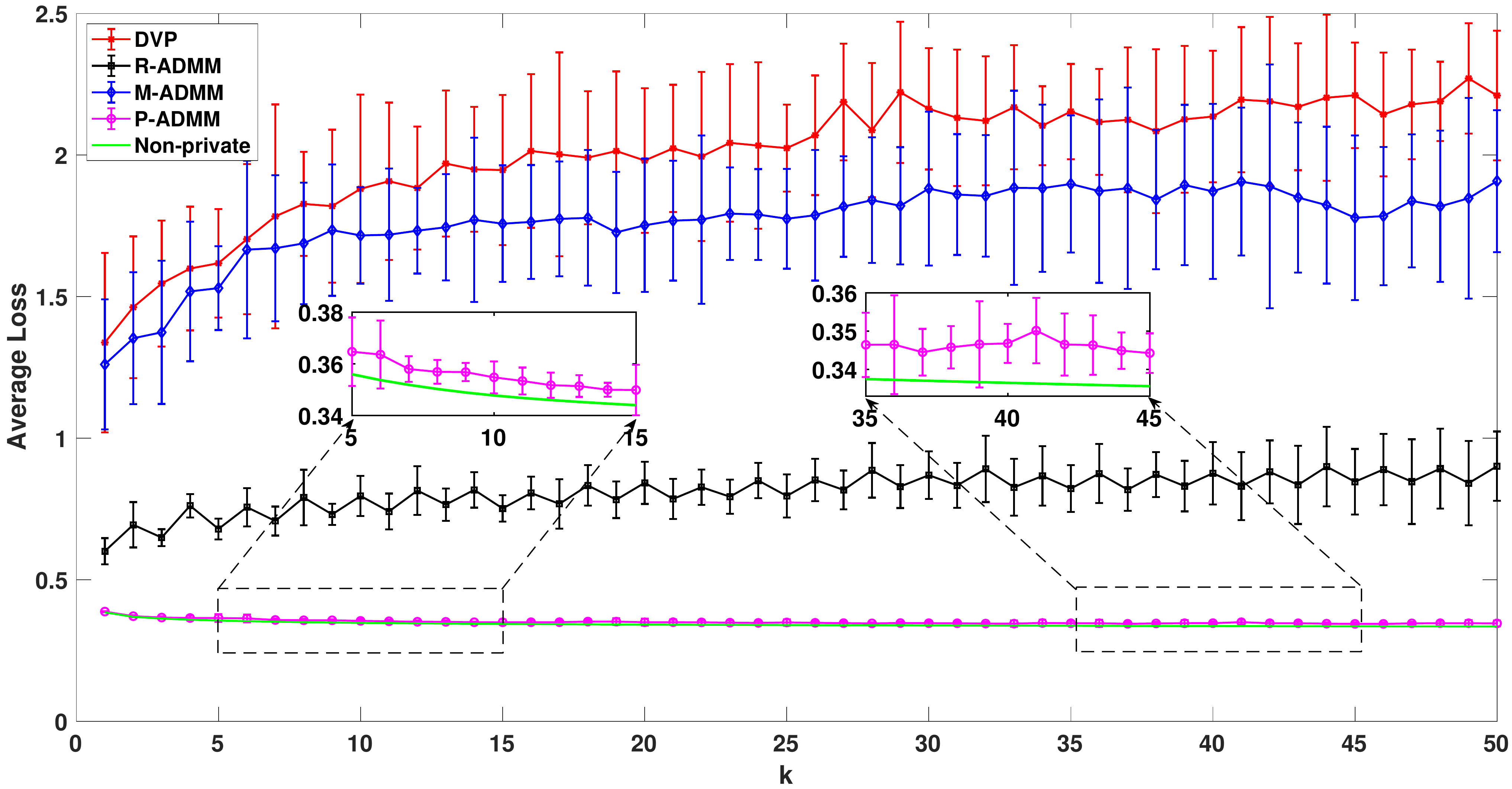}}
	\subfigure[Total privacy loss $\epsilon = 10$ .]{\includegraphics[width=0.49\textwidth]{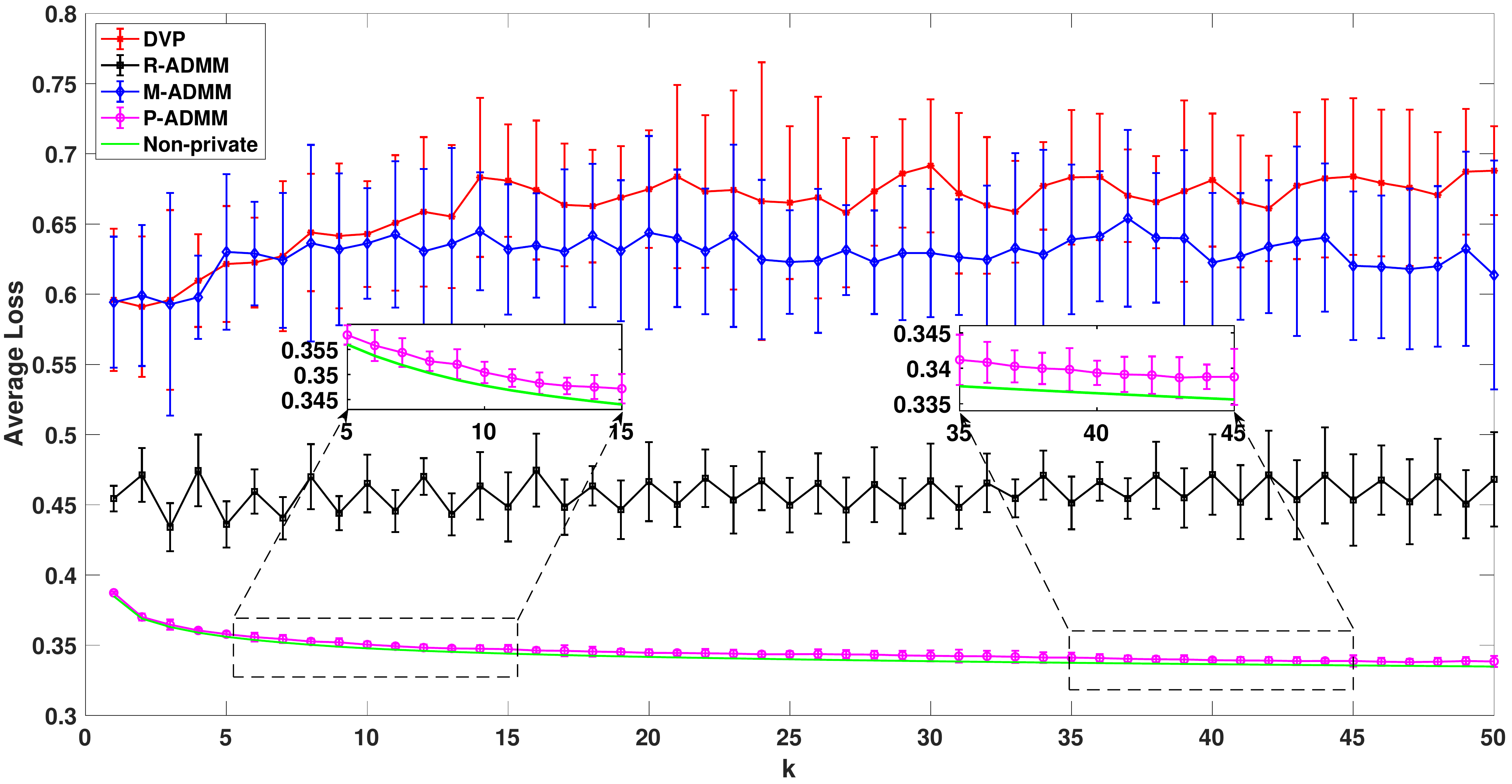}}
	\caption{Compare convergence and privacy for unregularized logistic regression.}
	\label{fig3.11}
\end{figure*}

Summing Lemma~\ref{pro1} from $k =0$ to $k = K-1$, we have
\begin{align*}
    &\frac{1}{\eta} (\sum_{k = 0}^{K-1}f(x^{k+1}) - f(x^*)) +\left<2r, \sum_{k =0}^{K-1}Qx^{k+1}\right> \\
    &\leq \sum_{k =0}^{K-1}\left( \frac{\phi_{max}^2(L_+)}{2\phi_{min}(L_-)} \|\xi^k  \|_2^2 + \left<\xi^{k+1}, 2Q(r^{k+1} - r) \right>\right) \\
    & ~~~   + \frac{1}{\eta} \| q^0 -q^*\|_G^2.
\end{align*}
By setting $r = 0$, there is
\begin{align*}
    &\frac{1}{\eta} (\sum_{k = 0}^{K-1}f(x^{k+1}) - f(x^*))\\
    &\leq \sum_{k =0}^{K-1}\left( \frac{\phi_{max}^2(L_+)}{2\phi_{min}(L_-)} \|\xi^k  \|_2^2 + \left<\xi^{k+1}, 2Q r^{k+1}   \right>\right) \\
    & ~~~   +  \|Qx^0\|_2^2 +\| x^0 -x^*\|_{\frac{L_-}{2} }^2.
\end{align*}
By taking expectation of above function and using Jensen's inequality, we have
\begin{align*}
    &\mathbb{E}[f(\hat{x}^K) - f(x^*)]\\
&    \leq \frac{\eta\|Qx^0\|_2^2}{K} + \frac{\eta \|x^0 - x^*\|_{\frac{L_+ }{2}}^2}{K}\\ &~~~~+\frac{\eta}{K}\sum_{k =0}^{K-1}  \frac{\phi_{max}^2(L_+)}{2\phi_{min}(L_-)}\mathbb{E} \|\xi^k  \|_2^2 \\
  &\leq \frac{\eta\|Qx^0\|_2^2}{K} + \frac{\eta \|x^0 - x^*\|_{\frac{L_+ }{2}}^2}{K}\\ &~~~~+\frac{1}{K}\frac{\eta d \phi_{max}^2(L_+) \sum_{i=1}^N (\sigma^2)_i^1}{2\phi_{min}(L_-)(1-R)},
\end{align*}
where $\hat{x}^K  = \frac{1}{K} \sum_{k=0}^{K-1} x^{k+1} $. In the first inequality, we use $\mathbb{E}[\xi^{k+1}] = 0$. In the second inequality, we use the sum of infinity terms of geometric sequence.
\end{proof}
\begin{remark}
Compared with the non-private decentralized ADMM in \cite{7870649}, P-ADMM also achieves a $\mathcal{O}(1/K)$ rate for a general convex problem. As we can see from above theorem, if the variance of Gaussian noise decaying linearly with a rate $R \in (0,1)$, then the averaged function value approaches the minimum function value with a convergence rate of $\mathcal{O}(1/K)$. 
\end{remark}

\section{Numerical Experiments}\label{experiments}
We devote this section to experimentally evaluate the performance of P-ADMM. In particular, we consider the loss function for a commonly used model: logistic regression. 

Given a sample $(y,z)$, where $y$ is a feature vector and the corresponding label is $z \in \{+1,-1\}$, the $L_2$-regularized logistic regression loss function is defined as
\begin{align*}
        \mathcal{L}(z, y, w)  = \log(1+ \exp(-z w^{T} y)) + \Lambda\|w\|_2^2,
\end{align*} 
which is $\Lambda$-strongly convex.

We also use the same real-world dataset as  \cite{zhang2016dual, zhang2018improving,zhang2018recycled}, i.e., the Adult dataset from UCI Machine Learning Repository~\cite{Dua-2017}, which consists of 48,842 personal records, including age, education, occupation, work-class, sex, race, income, etc. The goal is to predict whether the annual income is more than \$50k or not. 

We use the following steps to preprocess the datasets. We first remove all individuals with missing values and then use one-hot encoding method to convert every categorical attribute into a set of binary vectors. After that, we normalize all numerical attributes such that the range of each value is $[0,1]$. Finally, we transform labels of Adult $\{>50 \text{k}, \leq 50\text{k}\}$ to $\{+1, -1\}$. 

In our experiments, we compare our P-ADMM algorithm with benchmark algorithms under both the general convex and strongly convex cases by using unregularzied (i.e. $\Lambda = 0$) and regularized (i.e. $\Lambda \neq 0$) logistic regression loss functions. The benchmark algorithms consist of DVP, M-ADMM and R-ADMM. DVP is the name of dual variable perturbation method presented in \cite{zhang2016dual} that perturbs the dual variable during the iterative process. M-ADMM \cite{zhang2018improving} is based on a penalty perturbation method by adding a perturbation correlated to the step size to ensure differential privacy of ADMM algorithm. R-ADMM is a new variation of the differentially private ADMM algorithm proposed in~\cite{zhang2018recycled} that repeatedly uses the existing computational results to make updates, which improves the privacy-utility tradeoff significantly. To determine the practical cost of privacy for convex optimization, we further use non-private ADMM algorithm \cite{shiwadmm} as a non-private baseline to do classification task on Adult dataset. Moreover, We compare the convergence of P-ADMM with benchmark algorithms under the same total privacy loss. To obtain the same total privacy loss during the whole period, we carefully choose noise parameters, respectively.

For each parameter setting, we conduct 10 independent runs of algorithms. For each time, both the mean and standard deviation of the average loss are recorded. Specially, we define $f_k$ as the average loss over the training dataset of the $k$-th experiment $(1 \leq k \leq 10)$. The mean of the average loss is $f_{mean} = \frac{1}{10}\sum_{k =1}^{10} f_k$ and the standard error is $f_{std} = \sqrt{\frac{1}{10}\sum_{k =1}^{10} (f_k-f_{mean})^2}.$ The smaller the standard deviation $f_{std}$ the more stable of the algorithm. In all experiments, we set $\delta =0.0001$.

\begin{figure}
    \centering
    \includegraphics[width=0.485\textwidth]{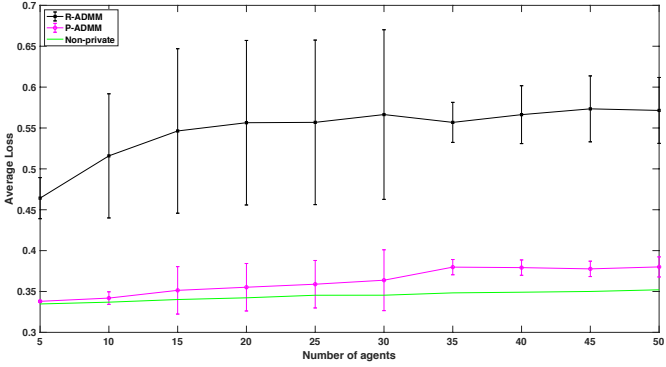}
    \caption{Compare accuracy for different agents: Total privacy loss $\epsilon = 10$.}
    \label{fig:diffagents}
\end{figure}

We present the results of our experiments with $L_2$-regularized logistic regression for a small network ($N=5$) in Figure \ref{fig_rg_5} and Figure \ref{fig_rg_10}. In particular, we first study the effect of varying the value of different variance decrease rate $R$ on the convergence of P-ADMM. For the total privacy loss $\epsilon = 5$ (Figure \ref{fig_rg_5}) and $\epsilon = 10$ (Figure \ref{fig_rg_10}), we can see that the variance decrease rate $R = 0.995$ results in a slower rate of privacy loss accumulation compared to benchmark algorithms and has quite low average loss during the
iterative process. In the right of Figure \ref{fig_rg_5} and Figure \ref{fig_rg_10}, P-ADMM has the lowest average loss compared to other differentially private algorithms. By quantifying the average loss versus Non-private, P-ADMM provides nearly the same convergence properties during the entire process. Moreover, 
we set the variance decrease rate $R = 0.995$ for the experiments with unregularized logistic regression on a small network ($N=5$), and present the results in Figure \ref{fig3.11}. As shown in Figure \ref{fig3.11}, P-ADMM outperforms the benchmark algorithms significantly in the small (a) and large (b) total privacy loss.

We also evaluate the effect of different number of agents by performing experiments with $L_2$-regularized logistic regression for networks of different sizes. Note that we control the total privacy loss to be the same during the entire iterative process ($K=50$) and varying the number of agents to compare accuracy (i.e. average loss at the final iteration). The results of this experiment are depicted in Figure \ref{fig:diffagents}. It is clear that our algorithm continues to outperform R-ADMM significantly for large networks.

\section{Conclusion}\label{conclusion}
In this paper, we have introduced a differentially private ADMM algorithm (P-ADMM) to address privacy concerns during the iterative process. In particular, we introduced Gaussian noise with linearly decaying variance in each iteration to preserves dynamic zCDP, a novel privacy framework that enjoys significant accuracy and tight privacy loss composition. 
We also theoretically analyzed convergence properties of the proposed algorithm for general convex and strongly convex optimization objectives.  
By performing extensive empirical comparisons with state-of-the-art methods for differentially private ADMM based machine learning algorithms, we demonstrated that P-ADMM exhibits superior convergence performance while providing strong privacy guarantee. 

\appendix
\begin{figure*}[t]
	\normalsize
	\begin{eqnarray}\label{eq:ii}
     &(1-b)\bigl( \frac{1}{4} - \frac{\theta}{1+\zeta}\bigl) \phi_{min}^2 (L_+) \|\tilde{x}^{k+1} - x^*\|_2^2
     + \bigl( 1- \frac{4\theta}{1+\zeta}\bigl) \|r^{k+1} - r^*\|_2^2\nonumber\\
    & \leq   \frac{1/4 + \theta}{1+\zeta}  \phi_{max}^2(L_+) \|\tilde{x}^{k} - x^*\|_2^2 + \frac{1}{1+\zeta} \|r^k - r^*\|_2^2  + \bigl( \frac{P +1/(2\theta)}{1+\zeta} +b\bigl( \frac{1}{4} - \frac{\theta}{1+\zeta}\bigl) \phi_{min}^2 (L_+)(\kappa_4 -1) \bigl) \|\xi^{k+1}\|_2^2
	\end{eqnarray}
	\hrulefill
\begin{eqnarray}\label{eq:i1}
     &\bigl( \frac{1}{4} - \frac{\theta}{1+\zeta}\bigl) \phi_{min}^2 (L_+) \|\tilde{x}^{k+1} - x^*\|_2^2 + \bigl( 1- \frac{4\theta}{1+\zeta}\bigl) \|r^{k+1} - r^*\|_2^2 \nonumber\\
    & \leq   \frac{1/4 + \theta}{1+\zeta}  \phi_{max}^2(L_+) \|\tilde{x}^{k} - x^*\|_2^2 + \frac{1}{1+\zeta} \|r^k - r^*\|_2^2 +  \frac{P +1/(2\theta)}{1+\zeta}  \|\xi^{k+1}\|_2^2 +\frac{4\theta \phi_{max}^2(M) }{1+\zeta}\|   x^{k+1} - x^*  \|_2^2 
	\end{eqnarray}
	\hrulefill
	\begin{eqnarray}\label{eq:i2}
		    &b\bigl( \frac{1}{4} - \frac{\theta}{1+\zeta}\bigl) \phi_{min}^2 (L_+)\|\tilde{x}^{k+1} - x^* \|_2^2 \nonumber\\
    &\geq b\bigl( \frac{1}{4} - \frac{\theta}{1+\zeta}\bigl) \phi_{min}^2 (L_+)(1-\frac{1}{\kappa_4}) \| x^{k+1} - x^*\|_2^2 -b\bigl( \frac{1}{4} - \frac{\theta}{1+\zeta}\bigl) \phi_{min}^2 (L_+)(\kappa_4 -1) \|\xi^{k+1}\|_2^2
    	\end{eqnarray}
	\hrulefill

\begin{eqnarray}\label{eq:i3}
&(1-b)\bigl( \frac{1}{4} - \frac{\theta}{1+\zeta}\bigl) \phi_{min}^2 (L_+) \|\tilde{x}^{k+1} - x^*\|_2^2 
+ \bigl( 1- \frac{4\theta}{1+\zeta}\bigl) \|r^{k+1} - r^*\|_2^2+ b\bigl( \frac{1}{4} - \frac{\theta}{1+\zeta}\bigl) \phi_{min}^2 (L_+)(1-\frac{1}{\kappa_4}) \| x^{k+1} - x^*\|_2^2 \nonumber\\
    & \leq   \frac{1/4 + \theta}{1+\zeta}  \phi_{max}^2(L_+) \|\tilde{x}^{k} - x^*\|_2^2 + \frac{1}{1+\zeta} \|r^k - r^*\|_2^2 + \bigl( \frac{P +1/(2\theta)}{1+\zeta} +b\bigl( \frac{1}{4} - \frac{\theta}{1+\zeta}\bigl) \phi_{min}^2 (L_+)(\kappa_4 -1) \bigl) \|\xi^{k+1}\|_2^2\nonumber\\ 
    & +\frac{4\theta \phi_{max}^2(M) }{1+\zeta}\|   x^{k+1} - x^*  \|_2^2   
    	\end{eqnarray}
	\hrulefill
\end{figure*}

\begin{lemma}\label{lemma6}
For $b\in(0,1)$ and $\kappa_4>1$, we have (\ref{eq:ii}) with $0 <\theta \leq \frac{b(1+ \zeta) \phi_{min}^2(L_+)(1- 1/\kappa_4)}{4b\phi_{min}^2(L_+)(1- 1/\kappa_4) + 16\phi_{max}^2(M) } $.
\end{lemma}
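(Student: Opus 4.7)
The plan is to start from Lemma~\ref{delta-coverge} and march through the three intermediate displays (\ref{eq:i1}), (\ref{eq:i2}), (\ref{eq:i3}) already laid out in the appendix, converting the abstract $\|q^{k+1}-\hat{q}^*\|_G^2$ inequality into an explicit recursion on $\|\tilde{x}^{k+1}-x^*\|_2^2$ and $\|r^{k+1}-r^*\|_2^2$, and finally choosing $\theta$ so that a spurious $\|x^{k+1}-x^*\|_2^2$ term on the right is absorbed into the left.

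First I would apply Young's inequality $\langle \xi,v\rangle \le \frac{1}{2\theta}\|\xi\|_2^2+\frac{\theta}{2}\|v\|_2^2$ to the stochastic cross term in Lemma~\ref{delta-coverge}. There $v^{k+1} = \eta L_+(\tilde{x}^{k+1}-\tilde{x}^k)+2\eta Q(r^{k+1}-r^*)+2\eta M(x^{k+1}-x^*)$, and I would split it into its three natural summands so that the Young penalty on the first two summands can be reabsorbed into the $\|q^{k+1}-\hat{q}^*\|_G^2$-style quantities on the LHS after unpacking the $G$-norm and using $\phi_{min}^2(L_+)\|\cdot\|_2^2 \le \|\cdot\|_{L_+}^2 \le \phi_{max}^2(L_+)\|\cdot\|_2^2$. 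The third summand leaves $\|x^{k+1}-x^*\|_2^2$ as a free quantity on the RHS with coefficient $\frac{4\theta \phi_{max}^2(M)}{1+\zeta}$, while every $\xi^{k+1}$ penalty accumulates into $\frac{P+1/(2\theta)}{1+\zeta}$. This is exactly (\ref{eq:i1}).

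Next I would invoke the same inequality $\|a+b\|_2^2+(\kappa-1)\|a\|_2^2 \ge (1-1/\kappa)\|b\|_2^2$ that powered Lemma~\ref{delta-coverge}, now with $a=\xi^{k+1}$, $b=x^{k+1}-x^*$, $a+b=\tilde{x}^{k+1}-x^*$ and $\kappa=\kappa_4$, then multiply through by $b\bigl(\frac{1}{4}-\frac{\theta}{1+\zeta}\bigr)\phi_{min}^2(L_+)$ to recover (\ref{eq:i2}). Splitting the LHS coefficient of $\|\tilde{x}^{k+1}-x^*\|_2^2$ in (\ref{eq:i1}) as $(1-b)+b$ and substituting (\ref{eq:i2}) into the $b$-share promotes a new term $b\bigl(\frac{1}{4}-\frac{\theta}{1+\zeta}\bigr)\phi_{min}^2(L_+)(1-1/\kappa_4)\|x^{k+1}-x^*\|_2^2$ onto the LHS and a matching $(\kappa_4-1)$-weighted noise penalty onto the RHS, which is (\ref{eq:i3}).

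Finally, to collapse (\ref{eq:i3}) into (\ref{eq:ii}) the new LHS $\|x^{k+1}-x^*\|_2^2$ coefficient must dominate the RHS one, i.e.\ $b\bigl(\frac{1}{4}-\frac{\theta}{1+\zeta}\bigr)\phi_{min}^2(L_+)(1-1/\kappa_4) \ge \frac{4\theta\phi_{max}^2(M)}{1+\zeta}$; clearing the $(1+\zeta)$ denominators and isolating $\theta$ yields precisely the stated bound $\theta \le \frac{b(1+\zeta)\phi_{min}^2(L_+)(1-1/\kappa_4)}{4b\phi_{min}^2(L_+)(1-1/\kappa_4)+16\phi_{max}^2(M)}$, under which the $\|x^{k+1}-x^*\|_2^2$ column drops out and (\ref{eq:ii}) follows. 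The main obstacle is the bookkeeping in the first step: the Young's parameter has to be calibrated so that the residual coefficients $\frac{1}{4}-\frac{\theta}{1+\zeta}$ and $1-\frac{4\theta}{1+\zeta}$ emerge exactly as stated on the LHS, while the $2\eta M(x^{k+1}-x^*)$ piece of the inner product is kept separate from the $Q(r^{k+1}-r^*)$ and $L_+(\tilde{x}^{k+1}-\tilde{x}^k)$ pieces so that the $\kappa_4$-trick in the subsequent step can cleanly eliminate it.
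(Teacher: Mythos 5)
Your proposal is correct and follows essentially the same route as the paper: Young's inequality on the stochastic cross term in Lemma~\ref{delta-coverge}, unpacking the $G$-norms via the singular values of $L_+$ to reach (\ref{eq:i1}), the $\|C_1+C_2\|_2^2+(\kappa_4-1)\|C_1\|_2^2\geq(1-1/\kappa_4)\|C_2\|_2^2$ trick with $C_1=\xi^{k+1}$ to get (\ref{eq:i2}), the $(1-b)+b$ split to form (\ref{eq:i3}), and the stated $\theta$-threshold to absorb the residual $\|x^{k+1}-x^*\|_2^2$ term. The only cosmetic deviation is that you apply Young separately to the three summands of the inner product, whereas the paper applies it once to the whole vector and then splits the resulting squared norm into four pieces via $\|a+b\|_2^2\leq 2\|a\|_2^2+2\|b\|_2^2$; as you note, this only affects the calibration of constants, not the argument.
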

\begin{proof}
By decomposing $\|q^{k+1} - \hat{q}^* \|_G^2$ and $\|q^{k } - \hat{q}^* \|_G^2$, the result in Lemma~\ref{delta-coverge}  can be rewritten in the following form
\begin{align}\label{eq:it}
    &\|\frac{ \eta L_+ }{2  }(\tilde{x}^{k+1} - x^*)\|_2^2+\|\eta(r^{k+1}- r^*)\|_2^2    \nonumber\\
    & \leq\frac{1}{1+\zeta} \|\frac{ \eta L_+ }{2  }(\tilde{x}^{k+1} - x^*)\|_2^2 + \frac{1}{1+\zeta}\|\eta(r^{k+1}- r^*)\|_2^2\nonumber \\
    &~~~+ \frac{P}{1+\zeta} \|\xi^{k+1}\|_2^2+  
 \frac{1}{1+\zeta} \bigl< \xi^{k+1}, 
 \eta L_+ (\tilde{x}^{k+1}  
  - \tilde{x}^k)\nonumber\\
  &~~~+ 2\eta Q(r^{k+1}- r^*) 
 + 2\eta M (x^{k+1} - x^*) \bigl> \nonumber\\
 &\leq\frac{1}{1+\zeta} \|\frac{ \eta L_+ }{2  }(\tilde{x}^{k+1} - x^*)\|_2^2 + \frac{1}{1+\zeta}\|\eta(r^{k+1}- r^*)\|_2^2 \nonumber\\
 &~~~+ \bigl(\frac{P}{1+\zeta} + \frac{1/(2\theta)}{1+\zeta}\bigl) \|\xi^{k+1}\|_2^2 +\frac{\theta}{1+\zeta} \|\frac{\eta L_+}{2} (\tilde{x}^{k+1}  
  - \tilde{x}^k) \nonumber\\
 &~~~+  \eta Q(r^{k+1}- r^*) 
 +  \eta M (x^{k+1} - x^*) \|_2^2 \nonumber\\
  &\leq\frac{1}{1+\zeta} \|\frac{ \eta L_+ }{2  }(\tilde{x}^{k+1} - x^*)\|_2^2 + \frac{1}{1+\zeta}\|\eta(r^{k+1}- r^*)\|_2^2 \nonumber\\
 &~~~ +\frac{4\theta}{1+\zeta} \|\frac{\eta L_+}{2} (\tilde{x}^{k+1}   - x^*)\|_2^2   +\frac{4\theta}{1+\zeta} \|\frac{\eta L_+}{2} (\tilde{x}^{k }   - x^*)\|_2^2\nonumber \\
&~~~+\frac{4\theta}{1+\zeta}\|  \eta Q(r^{k+1}- r^*) \|_2^2 +\frac{4\theta}{1+\zeta}\| \eta M (x^{k+1} - x^*) \|_2^2 \nonumber\\
&~~~ +\bigl(\frac{P}{1+\zeta} + \frac{1/(2\theta)}{1+\zeta}\bigl) \|\xi^{k+1}\|_2^2.
\end{align}
In the second inequality, we use $\frac{1}{\theta}a^2 + \theta b^2 \geq 2ab$ for $\theta>0$. In the last inequality, we use $\| a+b\|_2^2 \leq 2\|a\|_2^2 +2\|b\|_2^2 $.

Rearranging the inequality (\ref{eq:it}) and diving by $\eta^2$, we have (\ref{eq:i1}), where we choose $\theta$ and $\zeta$ to satisfy $ \frac{1}{4} - \frac{\theta}{1+\zeta} >0$.
Since for any $\kappa>0$ and any matrices $C_1$ and $C_2$ with the same dimensions, there is 
\begin{align}
   \|C_1 + C_2\|_2^2 + (\kappa -1)\|C_1\|_2^2 \geq (1- \frac{1}{\kappa}) \|C_2\|_2^2, 
\end{align}
which implies 
$\|\tilde{x}^{k+1} - x^* \|_2^2  \geq (1-\frac{1}{\kappa_4}) \| x^{k+1} - x^*\|_2^2  -(\kappa_4 -1) \|\xi^{k+1}\|_2^2$ for any $\kappa_4 >1$.

Then, for $b\in(0,1)$, we obtain (\ref{eq:i2}).
Combining (\ref{eq:i1}) and (\ref{eq:i2}), we have (\ref{eq:i3}).

Next, if we set $\theta \leq \frac{b(1+ \zeta) \phi_{min}^2(L_+)(1- 1/\kappa_4)}{4b\phi_{min}^2(L_+)(1- 1/\kappa_4) + 16\phi_{max}^2(M) } $, it ensures 
\begin{align*}
    b\bigl( \frac{1}{4} - \frac{\theta}{1+\zeta}\bigl) \phi_{min}^2 (L_+)(1-\frac{1}{\kappa_4}) \geq \frac{4\theta \phi_{max}^2(M) }{1+\zeta}.
\end{align*}

Thus, the inequality (\ref{eq:i3}) can be written as (\ref{eq:ii}).
\end{proof}

\bibliography{./ding}
\bibliographystyle{IEEETran}

\end{document}